\documentclass{article}

 \usepackage[preprint]{neurips_2026}

\usepackage[utf8]{inputenc} %
\usepackage[T1]{fontenc}    %
\usepackage{hyperref}       %
\usepackage{url}            %
\usepackage{booktabs}       %
\usepackage{amsfonts}       %
\usepackage{amsmath}
\usepackage{amssymb}
\usepackage{amsthm}
\usepackage{nicefrac}       %
\usepackage{microtype}      %
\usepackage{xcolor}         %
\usepackage{algorithm}
\usepackage{algpseudocode}
\usepackage{bm}
\usepackage{wrapfig}
\usepackage{graphicx}       %
\usepackage{multirow}       %

\newtheorem{intheorem}{Informal Theorem}
\newtheorem{theorem}{Theorem}

\newtheorem{corollary}{Corollary}

\title{Task-Induced Riemannian Metrics for Vision Transformer Feature Spaces}

\author{%
    Andrew Bond \thanks{Corresponding author: \href{mailto:andrew.bond@epfl.ch}{andrew.bond@epfl.ch}}\\
    Koç University \\
    Istanbul, Türkiye \\
    \And
    Ege Erdem Özlü \\
    Koç University \\
    Istanbul, Türkiye \\
    \And
    Tuna Çimen \\
    Koç University \\
    Istanbul, Türkiye \\
    \And 
    Ilkin Umut Melanlioglu \\
    Koç University \\
    Istanbul, Türkiye \\
    \And
    Tolga Birdal \\
    Imperial College London \\
    London, UK \\
    \And
    Erkut Erdem \\
    Hacettepe University \\
    Ankara, Türkiye \\
    \And
      Aykut Erdem\\
    Koç University \\
    Istanbul, Türkiye \\
}

\begin{document}

\maketitle

\begin{abstract}

Methods operating on Vision Transformer (ViT) feature spaces typically rely on Euclidean distance or cosine similarity. This assumes that every direction is equally meaningful, but there is no reason to believe the true task geometry has this property. The task-sensitive geometry of the feature space is given by the pullback metric $g(F) = J(F)^\top J(F)$, where $J$ is the Jacobian of the decoder's output fed to a task-specific distance, with respect to the features. Storing the full $g$ is infeasible at modern scales, and for dense outputs such as depth maps even forming $J$ is impractical. We show that whether a low-rank approximation of this metric can be learned depends on the model-decoder pair, and we characterize this with a matrix-free diagnostic $\kappa_{cap}(r)$ computable with a low number of Jacobian-vector products. For tractable pairs, we develop the \textit{Spectral Pullback Network} (SPN), which learns a low-rank version of the metric from randomized power iteration, and we distill it into a $310$K-parameter importance head that predicts token importance directly from the features. When the Jacobian spectrum is too spread out for a low-rank approximation, passing the decoder's input features through a VAE bottleneck can restore tractability. Across DPT, DINOv2, CLIP, and VGGT backbones, $\kappa_{cap}(r)$ predicts which learned-metric architectures are viable. The importance head reaches Spearman $\rho = 0.998$ on DINOv2 CLS, and our geometric token pruning reduces the additional depth error of ToMe-based token selection by $25\%$ on DPT depth at prune ratio $0.5$, without fine-tuning the ViT. Project page: \url{https://cyberiada.github.io/TaskInducedViTs/}.\par
\end{abstract}

\section{Introduction}
Vision Transformer \citep{transformer, vit} feature representations are commonly compared using Euclidean distance or cosine similarity. This is convenient, but it assumes that all directions in feature space are equally relevant. Downstream decoders rarely behave this way. A change in a ViT feature may strongly affect a depth map, a class
embedding, or a camera-pose estimate in some directions, while changes of similar Euclidean size in other directions may have almost no effect. Thus, the similarity that matters for a task is not simply ``are these features close?'', but rather ``do these features induce similar task outputs?''

The decoder itself defines such a task-induced local geometry. Let \(F \in \mathbb{R}^{N \times D}\) be a ViT feature map at some layer and let \(\phi\) be a downstream decoder (with an optional task-specific distance) such as DPT for depth, the CLS embedding used for classification, or VGGT for camera pose. If \(J(F)\) is the Jacobian of the decoder's output with respect to \(F\), then $v^\top J(F)^\top J(F) v = \|J(F)v\|^2$ measures how much the task output changes when \(F\) is perturbed infinitesimally in direction \(v\). The pullback metric \(J(F)^\top J(F)\) therefore tells us which feature directions the decoder is sensitive to and which it nearly ignores. It is a local, first-order object, so it says nothing about two distant feature maps. The corresponding global notion is the geodesic distance under this metric, which we only approximate (Appendix~\ref{sec:task_retrieval}). \par

Storing this metric is infeasible at ViT scale, since $J^\top J$ has $\mathcal{O}(N^2 D^2) \approx 4 \times 10^{10}$ entries for a ViT-B/14 feature map. The Jacobian $J \in \mathbb{R}^{M \times ND}$ can be formed for small outputs, such as a 9-dimensional camera pose. For dense outputs such as a depth map ($M \approx 5 \times 10^4$), $J$ has $\sim 10^{10}$ entries, and only its products with vectors are practical. This raises two questions. \textit{Is the task sensitivity concentrated enough that a low-rank metric can capture it?} If so, \textit{can we learn it in a form that is useful at inference time, without needing backpropagation or iterative algorithms?} \par

We make three contributions.

\textbf{(1) A tractability diagnostic.} We introduce a scalar $\kappa_{cap}(r) \in [0, 1]$, measuring the fraction of decoder sensitivity captured by the top-$r$ singular directions of $J$. A second measurement, the coefficient of variation (CV) of the singular values, tells whether these directions are concentrated on a few tokens or spread across many. Both are computed offline with matrix-free Jacobian-vector products. We provide three theoretical results supporting our claims: a ceiling on the sensitivity that any rank-$r$ restriction of the metric retains (Theorem~\ref{inthm:rank_r}), a bound showing when per-token scoring is uninformative (Theorem~\ref{thm:impossibility}), and a convergence rate for power iteration (Theorem~\ref{thm:convergence}). The first and third adapt classical results to our setting. \par

\textbf{(2) A learned low-rank metric and its distillation.} The \textit{Spectral Pullback Network} (SPN) predicts a low-rank factorization of the task-induced metric from ViT features and is trained with supervision from randomized power iteration. For applications that only need token importance, we distill \citep{distillation} it into a small ($\sim$310K-parameter) \textit{importance head} that predicts per-token scores from features alone. When no low-rank approximation is adequate in feature space, a VAE bottleneck \citep{s2vae} on the features the decoder uses makes one possible again. \par

\textbf{(3) Geometric token pruning.} For CLS-based decoders, we replace cosine-based token merging with a task-sensitive merge rule. For dense decoders, we combine hard pruning with a standard \textit{Last-Layer Fusion} \cite{token_cropr} step that re-attaches pruned tokens before decoding. A first-order error bound guides both (Theorem~\ref{thm:pruning}). \par

Empirically, the diagnostic separates the backbone--decoder pairs we evaluate into four regimes (depending on the $\kappa_{cap}$ and CV) that predict which architecture is appropriate. On DINOv2 CLS, the importance head reaches Spearman $\rho = 0.998$ with its Jacobian-derived target. On DPT depth, geometric pruning reduces the additional error of ToMe-based token selection by 25\% at prune ratio 0.50, while ToMe remains competitive at the lowest ratios (Table~\ref{tab:dpt_pruning}). These results suggest that learning \textit{which} directions matter can be as valuable as learning better features. \par

\section{Related Work}
\paragraph{Metric learning and pullback geometry.} Classical metric learning fits a Mahalanobis metric from pairwise constraints \citep{xing_metric, lmnn} or contrastive embeddings \citep{contrastive}. Neural methods either output an SPD matrix directly \citep{rml_ot, taskmet}, sometimes only at keypoints \citep{geometry_hvae, data_aug_vae}, which needs $O(d^2)$ memory and is prohibitive at $d = ND$, or recover the metric implicitly, for example as a pullback through a learned map \citep{latent_space_oddity}. Pullback metrics are also used to study the latent spaces of generative models \citep{latent_space_oddity, geometry_gans, understanding_diffusion_models}. Our $g = J^\top J$ is the Gauss--Newton form of the Fisher metric of the decoder output \citep{natural_gradient, info_geometry}, estimated on a frozen ViT feature space rather than over model parameters or a generative latent space. The SPN represents it implicitly through its leading eigenspace.
\vspace{-4mm}
\paragraph{Token importance in ViTs.} Prior analyses use input-gradient saliency, attention aggregated across layers \citep{attention_rollout}, or attention weights and entropies \citep{attention_entropy} as proxies for token importance. We instead differentiate a probe map with respect to intermediate features and study the full pullback $J^\top J$. Large attention or gradient magnitude does not by itself separate task-sensitive from task-insensitive directions.
\vspace{-4mm}
\paragraph{Token pruning and merging.} ToMe \citep{tome} merges redundant tokens by cosine similarity with bipartite matching. DynamicViT \citep{dynamic_vit}, A-ViT \citep{avit}, and EViT \citep{evit} discard tokens using learned or attention-derived scores, and Token Cropr \citep{token_cropr} trains a scoring head end-to-end against task losses. DynamicViT, A-ViT, and Token Cropr fine-tune the backbone, whereas we keep the backbone--decoder pair frozen, so they are not direct comparisons in our setting. None of these methods measures token similarity with the decoder-induced geometry, which we do (Eq.~\ref{eq:dtask}), and we add theorems on when per-token scoring can succeed (Theorems~\ref{thm:impossibility} and~\ref{thm:pruning}).

\section{Background and Problem Formulation}
\label{sec:background}

\begin{figure}[t]
    \centering
    \includegraphics[width=0.88\linewidth]{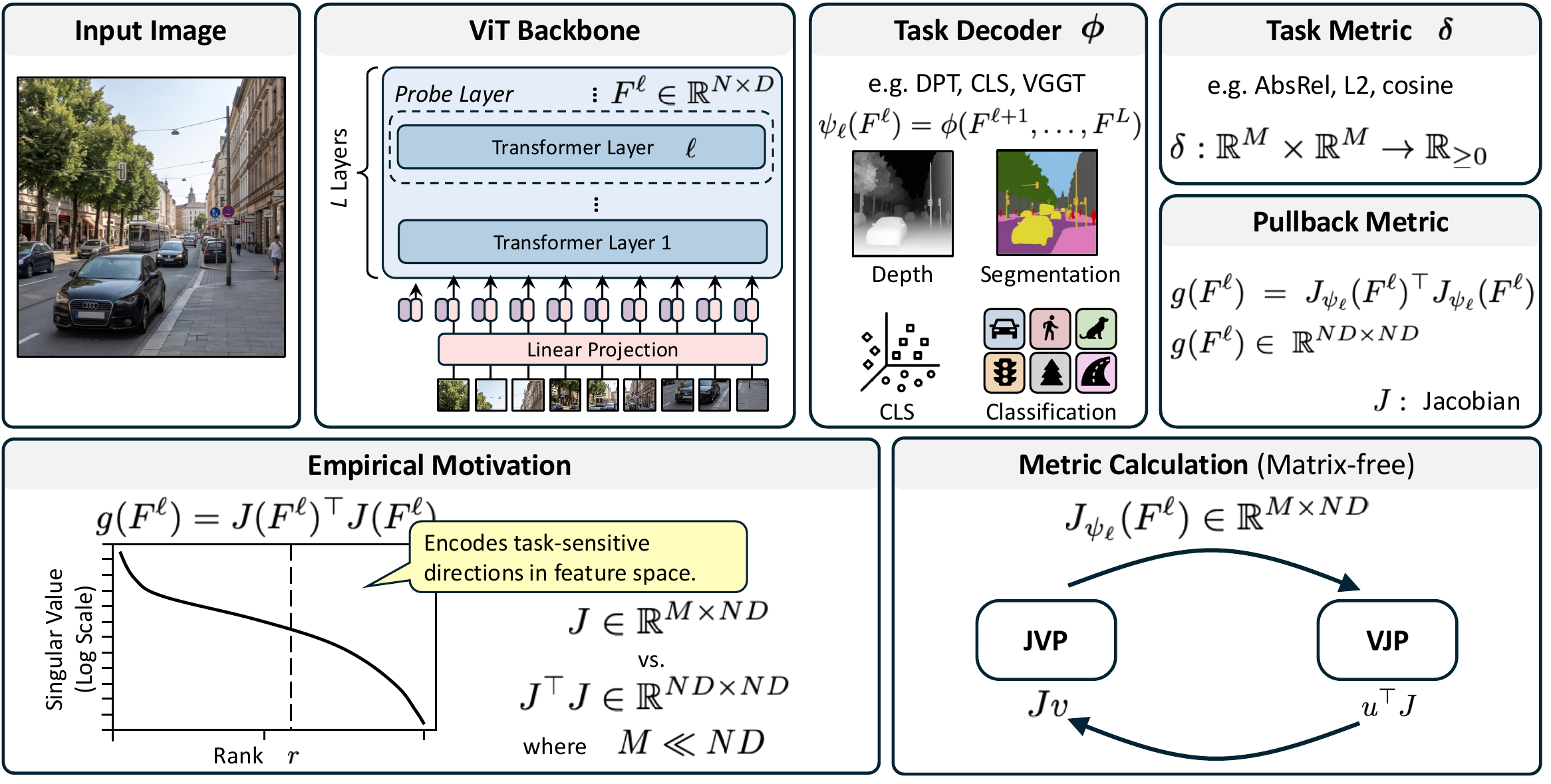}
    \caption{\textbf{The task-induced pullback metric and how we use it.} A ViT feature map $F^\ell\in\mathbb{R}^{N\times D}$ at probe layer $\ell$ passes through the remaining ViT layers and a task decoder $\phi$ (optionally followed by a task metric $\delta$), composing into the probe map $\psi_\ell : \mathbb{R}^{N \times D} \to \mathbb{R}^M$. Its Jacobian $J_{\psi_\ell} \in \mathbb{R}^{M \times N D}$ defines the pullback metric $g(F^\ell) = J^\top J \in \mathbb{R}^{N D \times N D}$, accessed only via JVP/VJP at  $\mathcal{O}(m+rq)$ cost. Rapid singular-value decay indicates that a rank-$r$ truncation can capture most of the task sensitivity.
    }
    \label{fig:overview}
\end{figure}

\paragraph{Problem formulation.} At layer $\ell$, a ViT produces features $F^\ell \in \mathbb{R}^{N \times D}$ ($N = 257$, $D = 768$ for ViT-B/14 at $224 \times 224$). A \textbf{task decoder} $\phi$ takes one or more feature layers and produces a vectorized output in $\mathbb{R}^M$, for example a DPT depth map \citep{dpt} ($M \sim 50{,}000$), the final-layer CLS embedding itself ($M = D = 768$, with no classification head), or a VGGT camera pose ($M = 9$: translation, quaternion, and field of view). For a probe layer $\ell$, the \textbf{probe map} $\psi_\ell$ composes the remaining ViT layers with the decoder,
\begin{equation}
    \psi_\ell(F^\ell) \;=\; \phi\!\left(F^{\ell+1}, \dots, F^L\right) \;\in\; \mathbb{R}^M.
\end{equation}
Its Jacobian $J_{\psi_\ell}(F^\ell) \in \mathbb{R}^{M \times ND}$ defines the \textbf{pullback metric} on the feature space,
\begin{equation}
    g(F^\ell) \;=\; J_{\psi_\ell}(F^\ell)^\top J_{\psi_\ell}(F^\ell) \;\in\; \mathbb{R}^{ND \times ND}.
\end{equation}
For a perturbation $v \in \mathbb{R}^{ND}$, $v^\top g(F^\ell) v = \|J_{\psi_\ell}\, v\|^2$ is the squared change in the decoder output, so large values mark task-sensitive directions. A task distance $\delta$ on the output (scale-invariant log error, cosine, LPIPS \cite{lpips}, etc.) can be composed with the probe map, which we suppress in the notation. Appendix~\ref{sec:riemannian_review} reviews the Riemannian background \citep{metric_learning_review}.

\paragraph{What kind of metric is $g$?} $g$ is the pullback of the Euclidean output metric along $\psi_\ell$. Since $M \ll ND$ for every decoder we consider, $g$ is positive semidefinite, i.e.\ a degenerate (singular) metric \citep{singular_riemannian}, or equivalently a feature-to-output sensitivity form on the ambient feature space. Its kernel is the set of feature perturbations the decoder ignores. We make no manifold/sub-manifold assumptions on the features and use no tangent spaces. We claim only that the top-$r$ eigendirections are the ambient directions of greatest decoder sensitivity. Our results (Theorems~\ref{thm:captured_sensitivity}--\ref{thm:pruning}) are stated for this PSD form, and the metric deployed at inference (Eq.~\ref{eq:spn}) adds an isotropic floor $\varepsilon > 0$ that makes it positive definite.

\paragraph{Choice of output metric.} The Euclidean output metric $J^\top J$ is a modeling choice. For any positive-definite output metric $H = LL^\top$, the pullback $J^\top H J = (LJ)^\top (LJ)$ is the Euclidean pullback of the decoder followed by $L$, so all our methods apply with $J$ replaced by $LJ$. To validate this, on the VGGT camera head, an SO(3)-aware $H$ leaves tractability unchanged ($\kappa_{cap} = 0.99$ under both) and the token importance nearly unchanged (Spearman $\rho = 0.995$ between the two rankings, Appendix~\ref{sec:pullback_vit}). We thus use the Euclidean metric in the rest of the paper.

\section{When is Metric Learning Tractable?}
\label{sec:tractability}
Before approximating or training anything, we want to know whether a rank-$r$ metric (for some fixed choice of $r$) can represent the task sensitivity of a given backbone--decoder pair, and whether that sensitivity is concentrated on a few tokens, or diffuse. We now introduce two cheap measurements that answer these questions, and provide the relevant theory.

\subsection{Captured Energy, Effective Rank, and Spatial Concentration}
A rank-$r$ pullback metric is most useful when its top-$r$ directions account for most of the task sensitivity. We measure this with the \emph{captured energy fraction}
\begin{equation}
    \kappa_{cap}(r) = \frac{\sum_{j=1}^r \sigma_j^2}{\|J\|_F^2} \in [0, 1],
\end{equation}
where $\sigma_j$ is the $j$-th largest singular value of $J$. By Theorem~\ref{inthm:rank_r}, $\kappa_{cap}(r)$ is the quality of the best rank-$r$ metric and an upper bound on the sensitivity that any restriction of $g$ to an $r$-dimensional subspace can retain. We also use the \textit{entropic effective rank} $r_{\text{eff}}(J) = \exp(H)$, computed from the spectral entropy $H$ of the normalized squared singular values. Both $\|J\|_F^2$ and $H$ are estimated without forming $J$, using Hutchinson's estimator \citep{hutchinson_trace} and Stochastic Lanczos Quadrature \citep{slq_algorithm}. The top-$r$ singular vectors are recovered by $q$ rounds of randomized power iteration \citep{halko_randomized, randomized_nla}. Together these give the diagnostic in Algorithm~\ref{alg:hutchinson} (Appendix~\ref{sec:algorithms}) at a cost of $\mathcal{O}(m + rq)$ JVP/VJP pairs, about $140$ at our defaults $m = 100$, $r = 20$, $q = 2$.

The recovered top-$r$ right singular vectors $v_1, \dots, v_r$ also reveal which of the tokens of the feature map carry the task-sensitive directions. Splitting each $v_j \in \mathbb{R}^{ND}$ into $N$ token blocks $v_j^{(t)} \in \mathbb{R}^D$ for $t = 1, \dots, N$, we measure how unevenly the energy of $v_j$ is distributed across tokens via the coefficient of variation
\begin{equation}
\label{eq:cv}
    \mathrm{CV} = \frac{1}{r} \sum_{j=1}^r \frac{\mathrm{std}_t\!\left(\|v_j^{(t)}\|_2\right)}{\mathrm{mean}_t\!\left(\|v_j^{(t)}\|_2\right)}.
\end{equation}
Low CV ($\approx 0$) means that the recovered singular vectors put roughly equal mass on every token. This is the \emph{delocalized} regime, where Theorem~\ref{thm:impossibility} shows that per-token scoring becomes uninformative. Higher CV means that the energy is concentrated on fewer tokens, and per-token scorers are then viable. CV therefore provides a way to predict whether a per-token architecture will work before one is trained.

\subsection{Structural Characterization of Tractability}
\label{sec:structural}

\begin{intheorem}[Captured Sensitivity Ceiling]

\label{inthm:rank_r}Consider the rank-$r$ approximation formed from the top-$r$ singular directions of $J(F)$, together with an isotropic floor $\epsilon I$. Among approximations of this form, the top-$r$ eigenspace gives the minimum Frobenius error. Moreover, as $\epsilon \to 0$, the expected sensitivity captured by this approximation is exactly $\kappa_{cap}(r)$. Thus, $\kappa_{cap}(r)$ is the largest fraction of the expected task sensitivity that can be retained by restricting $g$ to an $r$-dimensional subspace. \par
\end{intheorem}

\begin{intheorem}[Impossibility of Factored Metrics]
\label{thm:impossibility}
A \emph{factored} (per-token) metric is one that scores each token independently and combines the per-token scores additively, with no cross-token interaction. If a unit task-sensitive direction $v^*$ is $\zeta$-delocalized (each token block has squared norm in $[(1-\zeta)/N, (1+\zeta)/N]$), then in expectation over the feature distribution, every factored metric assigns nearly the same sensitivity to $v^*$ as to a uniformly random direction, up to relative error $\zeta$. The bound scales linearly with $\zeta$, so near-delocalized directions leave a factored scorer with proportionally little signal. (Proof: Appendix~\ref{sec:theorem_proofs}, Theorem~\ref{thm:impossibility_formal}.)
\end{intheorem}
\noindent\emph{Remark.} The amount of delocalization matters in practice. In \S\ref{sec:conformal_head} we train a small network, the importance head, to predict one score per token: how strongly that token's features contribute to the top-$r$ singular directions of $J$, weighted by the singular values (Eq.~\ref{eq:imp_target}). On DINOv2 CLS L10 ($\mathrm{CV} \approx 4.06$), its predictions match this target almost perfectly (Spearman $\rho = 0.998$). On delocalized DPT depth ($\mathrm{CV} \approx 0.026$ at the L02 and L10 probes of Table~\ref{tab:regime_taxonomy}), the same network, reaches only $\rho \approx 0.20$. Trained instead to predict each token's full Jacobian norm $\|J_{:,t}\|_F$, it reaches only $\rho \approx 0.09$ (Appendix~\ref{sec:conformal_head_ablations}). This is the weakness of per-token scoring that Theorem~\ref{thm:impossibility} predicts.

\begin{intheorem}[Convergence--Tractability]
\label{thm:convergence}
Run $q$ rounds of block power iteration on $J^\top J$ \citep{halko_randomized, randomized_nla}, starting from a Gaussian random sketch with $r_0 \geq r$ columns. With high probability over the sketch, the resulting subspace approximates the true rank-$r$ task-sensitive subspace with subspace-alignment defect $\leq C \cdot (\sigma_{r+1}/\sigma_r)^{2q}$, where $C$ depends on the sketch but not on $q$. This is a geometric convergence rate in $q$, not a tight finite-$q$ guarantee. The constant $C$ grows as $\mathcal{O}(ND\, r_0)$ and can be very large, so the bound alone does not tell us how many iterations suffice. In practice we increase $q$ until the recovered subspace stabilizes. Flat spectra need more iterations ($q \approx 15$ in our experiments). (Proof: Appendix~\ref{sec:theorem_proofs}, Theorem~\ref{thm:convergence_formal}.)
\end{intheorem}

\subsection{Regime Taxonomy}
\label{sec:regime_taxonomy}
We apply this analysis (written fully as Algorithm~\ref{alg:hutchinson} in the Appendix) to six backbone--decoder--probe-layer combinations spanning DPT depth \citep{depth_anything_v2}, DINOv2 \citep{dinov2}, and VGGT \citep{vggt}. An analogous CLIP CLS measurement appears in Appendix~\ref{sec:clip_cls_results}. Table~\ref{tab:regime_taxonomy} reports $\kappa_{cap}(20)$, the per-token CV, and the full-spectrum effective rank $r_{\text{eff}}^{\text{SLQ}}$. The results fall into four distinct regimes.

\begin{wraptable}[17]{r}{0.56\textwidth}
\vspace*{-20pt}
\centering
\footnotesize
\setlength{\tabcolsep}{4pt}
\caption{Tractability diagnostic ($r{=}20$, $m{=}100$, $q{=}2$, 50 images). Layer indices give the probe layer, \emph{(1-layer)} marks a single-layer Depth Anything V2 (DA V2) probe, and \emph{(full)} marks the full-resolution VGGT depth output. For the two VGGT dense rows, $\kappa_{cap}(20)$ is measured at the listed probe layer and $r_{\text{eff}}^{\text{SLQ}}$ at the input to VGGT's heads (App.~\ref{sec:multi_layer}). CV is N/A for VGGT camera (rank-9 output). The last column gives the regime of \S\ref{sec:regime_taxonomy}.}
\label{tab:regime_taxonomy}
\begin{tabular}{lcccc}

\toprule
Config & $\kappa_{cap}(20)$ & CV & $r_{\text{eff}}^{\text{SLQ}}$ & Regime \\
\midrule
DA V2 depth, L02            & 95.6\%   & 0.026 & 34    & (ii)  \\
DA V2 depth, L10 (1-layer)  & 96.6\% & 0.026 & 20   & (ii)  \\
DINOv2 CLS, L10             & 33.4\%   & 4.06  & 270   & (iii) \\
VGGT camera, L16            & $\approx 100\%$ & N/A & 9 & (iv)  \\
VGGT pointcloud, L08        & 96.6\%   & 0.59  & 212   & (i)   \\
VGGT depth, L16 (full)      & 76.2\%   & 0.63  & 282   & (i)   \\
\bottomrule
\end{tabular}
\end{wraptable}

\paragraph{Four regimes.} The six rows fall into four cases, listed in the last column of Table~\ref{tab:regime_taxonomy}, and each calls for a different architecture.

\emph{(i) Rich spectrum.} The two dense VGGT heads, depth and pointcloud, have effective ranks that exceed any practical rank budget by an order of magnitude, so no rank-$r$ metric captures most of the task sensitivity. The VAE pathway of \S\ref{sec:vae_extension} addresses this regime and brings the effective rank down to $[30, 53]$ for VGGT depth.

\emph{(ii) Low rank, delocalized.} The two Depth Anything V2 cases, which use the same DPT decoder at layers 2 and 10, fit within the rank budget, but their task-sensitive directions are spread evenly over tokens. By Theorem~\ref{thm:impossibility} a per-token metric cannot represent them, so learning the full metric needs cross-token attention (the SPN, \S\ref{sec:spn}). Per-token scores remain usable for pruning but are weak ($\rho \approx 0.20$), which is why dense pruning also relies on Last-Layer Fusion and multi-stage schedules (\S\ref{sec:token_pruning}).

\emph{(iii) Marginal $\kappa$, high CV.} The DINOv2 CLS Jacobian has a rich spectrum, so a rank-$20$ metric captures only a third of the task sensitivity. The regime is still workable because the concentration is spatial rather than spectral: a few tokens carry the task-sensitive directions, so per-token scoring is viable even though a faithful low-rank metric is out of reach. The importance head reaches $\rho = 0.998$ here, and CLIP CLS behaves similarly (Appendix~\ref{sec:clip_cls_results}).

\emph{(iv) Structurally rank-bounded.} The VGGT camera head outputs $M = 9$ scalars per view, so $\mathrm{rank}(J) \leq 9 < r$ and $\kappa_{cap}(20) \approx 100\%$ follows from the rank bound rather than from spectral concentration. This makes it a controlled setting for evaluating the SPN.

Two further observations hold across the rows. The convergence rate of Theorem~\ref{thm:convergence} holds, and two power-iteration steps suffice where the spectral gap is large, while flatter spectra need more (Appendix~\ref{sec:theorem_proofs}). A decoder that uses several feature layers is not intractable in itself, and stays tractable when those layers are close together (Appendix~\ref{sec:multi_layer}).

\section{Learning the Task-Induced Metric}
\label{sec:our_algorithm}
When the diagnostic indicates that a rank-$r$ metric is a meaningful target, we want a network that predicts it from features alone, so that the decoder's Jacobian is not needed at inference (as this would first require producing the output and applying backprop, losing all possible gains). We first see why simple supervision does not work, then introduce the SPN and its training procedure, and describe the small importance head used for token pruning.

\subsection{Why Naive Supervision Fails}
\label{sec:naive_fails}
When trying to train a model to predict the metrics from features alone, there are several natural ideas. \emph{Distance matching}, which fits $\|f_\theta(F_1) - f_\theta(F_2)\|$ to $\delta(\phi(F_1), \phi(F_2))$, constrains only the values and not the derivatives, and thus leaves the local geometry underdetermined between sampled pairs. Even a perfect match would approximate a straight-line distance, not geodesic distance. \emph{Random-direction regression}, which fits $\|J v\|^2$ on unit vectors $v$, fails because a uniform $v \in \mathbb{R}^{ND}$ overlaps the rank-$r$ subspace with probability $r/ND \approx 10^{-5}$, so the signal is dominated by directions the metric does not model, and requiring an infeasible amount of training steps to learn anything. Theorem~\ref{thm:convergence} motivates power iteration instead. Each step on $J^\top J$ amplifies the alignment with the task-sensitive subspace, so $q$ steps will make the JVP/VJP results into a usable supervision signal.

\subsection{The Spectral Pullback Network (SPN)}
\label{sec:spn}
We parametrize the rank-$r$ pullback metric as a neural network $g_\theta$ that takes probe-layer features $F$ and returns a symmetric positive-definite matrix in spectral (eigen-decomposition) form:
\begin{equation}
\label{eq:spn}
    v^\top g_\theta(F) v \;=\; \sum_{j=1}^{r} \underbrace{\lambda_j(F; \theta)}_{\text{learned}}\, \big(\underbrace{u_j(F; \theta)}_{\text{learned}}{}^\top v\big)^{2} \;+\; \underbrace{\varepsilon}_{\text{fixed}} \|v\|^{2}, \qquad U(F;\theta)^\top U(F;\theta) = I_r.
\end{equation}
The $r$ mode vectors $u_j(F; \theta) \in \mathbb{R}^{ND}$ are the columns of $U \in \mathbb{R}^{ND \times r}$, which is constrained to be orthonormal. They are the network's estimates of the top-$r$ right singular vectors $v_j$ of $J(F)$, and the energies $\lambda_j(F; \theta) > 0$ predict the squared singular values $\sigma_j^2$. Both depend on the input features $F$ and the trainable parameters $\theta$. The constant $\varepsilon > 0$ is a fixed regularizer that keeps $g_\theta$ positive definite at inference, and is required for the Woodbury product below. Because this floor is isotropic, it does not change the informative rank-$r$ eigenspace or the importance scores. The captured-energy and Eckart--Young statements of Theorem~\ref{inthm:rank_r} therefore concern the structured part (the $\varepsilon \to 0$ limit), not the full-rank version. When $u_j \to v_j$ and $\lambda_j \to \sigma_j^2$, the SPN recovers the rank-$r$-plus-isotropic metric of Theorem~\ref{inthm:rank_r}.
\vspace{-3mm}
\paragraph{Architecture.} The network maps $F$ through a linear projection ($D \to H$), two pre-norm multi-head attention blocks, and a per-token projection of $Dr$ coordinates, which are reshaped to $ND \times r$ and QR-orthonormalized into $U(F;\theta)$. The energy network is a global pool, an MLP, and a Softplus, and it produces the $r$ positive energies $\lambda(F;\theta)$. At inference, the product $g_\theta(F)\, v = \varepsilon v + U(\lambda \odot (U^\top v))$ costs $O(NDr)$ rather than $O((ND)^2)$. Per-token importance is available in closed form, $\mathrm{imp}_\theta(t) = \sqrt{\sum_j \lambda_j \,\|u_j^{(t)}\|^2}$, where $u_j^{(t)} \in \mathbb{R}^{D}$ denotes the $t$-th token block of $u_j$.

Cross-token attention is necessary in the delocalized regime. When the singular vectors are spread evenly across tokens (low CV), computing $u_j^\top v = \sum_t u_j^{(t)\top} v^{(t)}$ requires summing contributions from all $N$ tokens, which per-token architectures cannot do. Theorem~\ref{thm:impossibility} formalizes this failure mode.

\subsection{RandNLA Training}
\label{sec:randnla_training}
The SPN obtains supervision from power iteration using Jacobian-vector and vector-Jacobian products, without explicitly forming $J$. At each training step we sample a Gaussian matrix $\Omega \in \mathbb{R}^{ND \times r_0}$ with $r_0 \geq r$. We apply $q$ rounds of power iteration $\Omega \gets J^\top(J \Omega)$, using alternating JVP and VJP calls and re-orthonormalizing between rounds \citep{saad_iterative}, and take the first $r$ columns as the target mode matrix $\widehat V_r$. By Theorem~\ref{thm:convergence}, $\widehat V_r$ converges exponentially to the true top-$r$ task-sensitive subspace as $q$ grows. The training loss has two parts. The first is a Grassmannian \emph{subspace loss} $\mathcal{L}_{\text{sub}} = 1 - \tfrac{1}{r}\|U(F;\theta)^\top \widehat V_r\|_F^2$, which measures the angle between the predicted and target subspaces and does not depend on the order of the modes. The second is an \emph{energy regression} $\mathcal{L}_{\text{en}} = \mathrm{MSE}(\log \lambda(F;\theta),\, \log \widehat S^2)$ on the empirical squared norms $\widehat S_j^2 = \|J \hat v_j\|^2$ (Algorithm~\ref{alg:randnla}, Appendix~\ref{sec:algorithms}). When two adjacent singular values are nearly degenerate ($\sigma_j / \sigma_{j+1} < 1.05$), the modes can swap between images, so we exclude such modes from $\mathcal{L}_{\text{en}}$ for that image. The subspace loss is unaffected by this, though. If the SPN fits these targets exactly, its sensitivity ratio converges to the ceiling $\kappa_{cap}(r)$ as $q$ grows (Corollary~\ref{cor:ssm_quality}). In practice a single network shared across images does not fit every image exactly (Appendix~\ref{sec:spn_vae_validation}).
\subsection{Importance Head and Inference-Time Distillation}
\label{sec:conformal_head}
Applications that only need a per-token importance score, such as token pruning (\S\ref{sec:token_pruning}), do not need the SPN at inference time. The \textbf{importance head} $h_\theta : \mathbb{R}^{N \times D} \to \mathbb{R}^N$ is a small network ($\sim$310K parameters) that predicts one scalar per token directly from $F$, with no Jacobian access and no SPN call at inference. Its target is the \textbf{exact (Jacobian-derived) importance}
\begin{equation}
\label{eq:imp_target}
    \mathrm{imp}^*(F)_t \;=\; \sqrt{\sum_{j=1}^{r} \sigma_j^{2}\, \big\|v_j^{(t)}\big\|_2^{2}},
\end{equation}
to which the SPN's $\mathrm{imp}_\theta(t) = \sqrt{\sum_j \lambda_j \|u_j^{(t)}\|^{2}}$ converges as $u_j \to v_j$ and $\lambda_j \to \sigma_j^2$. Since $\mathrm{imp}^*(F)$ depends only on the top-$r$ pairs $\{(\sigma_j, v_j)\}$, which a per-image randomized SVD recovers (Algorithm~\ref{alg:randnla}), we train $h_\theta$ from cached targets without instantiating the SPN (although the SPN can also be used). The $\sigma^2$-weighted top-$r$ projection keeps spatial signal even when the plain block norm $\|J_{:,t}\|_F$ becomes uniform across tokens (Theorem~\ref{thm:impossibility}, with the ablation in Appendix~\ref{sec:conformal_head_ablations}).
\vspace{-3mm}
\paragraph{Architecture and training.} The head is one cross-token MHA block (4 heads), a per-token MLP, and a Softplus output, with loss
\begin{equation}
\label{eq:conformal_loss}
    \mathcal{L}(\theta) = \mathbb{E}_F\!\left[\big\|\log h_\theta(F) - \log \mathrm{imp}^*(F)\big\|^{2}\right] + w_{\text{rank}}\, \mathcal{L}_{\text{rank}}\!\left(h_\theta(F),\, \mathrm{imp}^*(F)\right), \quad w_{\text{rank}} = 0.5.
\end{equation}
The ranking term reflects that pruning depends on token ordering, not absolute scale. Theorem~\ref{thm:pruning} motivates the target: with uniform residuals, the best pruning set removes the tokens with the smallest Jacobian block norms (Corollary~\ref{cor:opt_prune}). $\mathrm{imp}^*$ keeps the top-$r$ part of these block norms for the probe-map Jacobian at the prune layer, which we use as a computable proxy for the decoder Jacobian at the final layer that appears in the bound.

\subsection{Empirical Validation}
\label{sec:empirical_validation}
On DINOv2 CLS L10, the importance head trained on 2{,}000 ImageNet-val \citep{imagenet} images reaches Spearman $\rho = 0.998 \pm 0.001$ against the target $\mathrm{imp}^*$. Here $\rho$ is the rank correlation over the tokens of each image, averaged over a 200-image held-out set. The head costs a small fraction of a backbone forward pass. The $\sigma^2$-weighted target is essential, as replacing it with plain block norms lowers $\rho$ to $\approx 0.09$ on DPT, predicted by Theorem~\ref{thm:impossibility} (ablation in Appendix~\ref{sec:conformal_head_ablations}). The SPN itself reaches the deterministic supervision ceiling at the cheapest training setting on the VAE-compressed VGGT depth pipeline (\S\ref{sec:vae_extension}, Appendix~\ref{sec:spn_vae_validation}).

\paragraph{Why not learn token sensitivity directly?} A simpler alternative is to directly train on per-token sensitivity, i.e.\ the plain Jacobian block norms $\|J_{:,t}\|_F$, without the spectral construction. This is the ablation above. On DPT depth it reaches $\rho \approx 0.09$, against $\rho \approx 0.20$ for the $\sigma^2$-weighted top-$r$ target, which can only be computed through the spectral construction. Where sensitivity is concentrated on a few tokens (high CV), a directly trained head can be enough. Where it is delocalized, the spectral target gives the head a signal it can learn. The SPN also provides more than a score per token, because its rank-$r$ factors define a task-induced quadratic form on whole feature maps, and has many other potential applications.

\section{Extending to Intractable Settings via VAE Compression}
\label{sec:vae_extension}
Now we consider the case where the diagnostic rules out a low-rank metric because the task sensitivity is spread over far more directions than any practical rank. Dense decoders for depth, point clouds, or image reconstruction produce outputs in $\mathbb{R}^M$ with $M$ in the tens of thousands. When the resulting Jacobian's effective rank exceeds the rank budget by an order of magnitude, no rank-$r$ SPN can be a faithful approximation in the original feature space. This is the case for VGGT depth and pointcloud, whose effective ranks at the input to VGGT's heads are $282$ and $212$ (regime~(i) of \S\ref{sec:regime_taxonomy}). 

Note that the rank of a given Jacobian must be no larger than the minimum of the dimension of the input and output spaces, so $rk(J) \leq \min(M, ND)$. Since $M$ is a property of the output space and cannot be changed, the logical way to reduce the Jacobian rank is to reduce $ND$. We do this with a pretrained variational autoencoder (VAE) on the ViT features which the task decoders take as input. For VGGT, let $x_{\text{vis}}$ be the concatenated patch tokens of the four layers read by the heads (layers 4, 11, 17, 23). The VAE encoder $E$ maps $x_{\text{vis}}$ to a latent $z \in \mathbb{R}^{M'}$ made of 32 register tokens of dimension 128 ($M' = 4096$). Its decoder $G$ reconstructs the features, and the frozen task head takes as input the reconstruction,
\begin{equation}
    \tilde\psi(x_{\text{vis}}) \;=\; \phi\big(G(E(x_{\text{vis}}))\big) \;\in\; \mathbb{R}^{M}.
\end{equation}
Every path from the features to the output passes through $z$, so $\mathrm{rank}(J_{\tilde\psi}) \leq M'$ however large $M$ is. Empirically, the compression moves the effective spectrum into the tractable regime. VGGT depth probed at $x_{\text{vis}}$ has $r_{\text{eff}}^{\text{SLQ}} = 282$ (regime~i). With the VAE bottleneck in place, $r_{\text{eff}}^{\text{SLQ}} \in [30, 53]$ and $r_{0.90} \approx 40$ across our test configurations, a $\approx 6\times$ reduction that places the compressed metric in regime~(ii). The same compression applies to VGGT pointcloud ($r_{\text{eff}}^{\text{SLQ}} = 212$ at $x_{\text{vis}}$). To train an SPN we probe directly at the latent. The map $z \mapsto \phi(G(z))$ has a Jacobian with only $M'$ columns, so its right singular vectors live in $\mathbb{R}^{M'}$ and exact SVD targets are cheap to compute. In all our experiments we use the $S^2$-VAE \citep{s2vae} with power-spherical bottlenecks \citep{power_spherical}, a VAE tailored specifically to ViT features, which reaches a depth-reconstruction quality of $\delta_1 \approx 0.99$ on VGGT depth.
\vspace{-3mm}
\paragraph{Approximation induced by VAE compression.} The compressed pullback $g_{\tilde\psi}$ is the pullback of the task metric through the VAE-reconstructed features, not through the original ones, so the metric we learn here is not literally $g$ but an approximation of it. When the VAE reconstructs the task-relevant content of the features well, the two pullbacks are expected to agree approximately in the task-sensitive directions.
\vspace{-3mm}
\paragraph{Experimental validation.} On the VAE-compressed VGGT depth pipeline, probing at the latent $z$, RandNLA with a single power-iteration step ($q = 1$) reaches a subspace alignment of $V_1 = 0.713$ with the true rank-$r$ task-sensitive subspace. This essentially matches the deterministic \emph{supervision ceiling} of $V_1 = 0.710$, which is the best this network reaches when given exact SVD targets at every step. A randomly initialized network has $V_1 = 0.005$, so the gap between the random and trained models reflects learned task geometry. The saturation at the cheapest setting, $q = 1$, is consistent with Theorem~\ref{thm:convergence}, which predicts fast convergence when the spectral gap is large. It supports both the VAE pathway and the RandNLA algorithm. Appendix~\ref{sec:spn_vae_validation} gives the full setup and ablations.

\section{Token Merging and Pruning via Geometric Importance}
\vspace{-2mm}
\label{sec:token_pruning}
We now use the importance head to remove or merge tokens in a frozen ViT while keeping the decoder's output close to the unpruned one. The prune ratio $\eta$ is the fraction of patch tokens removed or merged. Let $F^L$ and $\hat F$ be the final-layer features without and with pruning, let $\mathcal{T}$ be the set of removed tokens, and let $\rho_t$ be the part of token $t$'s feature update that is lost after correction (Appendix~\ref{sec:pruning_details}). With $J_{:,t}(\hat F)$ the block of the decoder Jacobian for token $t$ and $\beta$ a bound on the decoder's Hessian, a Taylor expansion (Theorem~\ref{thm:pruning}) gives
\begin{equation}
\label{eq:pruning_bound}
    \big\|\phi(F^L) - \phi(\hat F)\big\|_2 \;\leq\; \sum_{t \in \mathcal{T}} \underbrace{\big\|J_{:,t}(\hat F)\big\|_F}_{\text{token importance}}\, \underbrace{\|\rho_t\|_2}_{\text{residual}} \;+\; \tfrac{\beta}{2} \sum_{t \in \mathcal{T}} \|\rho_t\|_2^{2}.
\end{equation}
The first term is small when the removed tokens have low task sensitivity, which is what the importance head targets. The second term depends on how much of the removed tokens' content is recovered.

\subsection{CLS Decoders: Soft Merge}
\label{sec:soft_merge}
A CLS decoder uses only the CLS token and is invariant to permutations of the other tokens, so tokens can be merged instead of removed. We follow ToMe \citep{tome}. The $\eta N$ tokens with the lowest importance scores are each merged into their nearest remaining token, with importance-weighted averaging. We replace ToMe's cosine similarity with the rank-$r$ pullback metric restricted to a single token (Eq.~\ref{eq:dtask}, Appendix~\ref{sec:pruning_details}). In our experiments this distance uses the SPN prediction for $J$. All CLS strategies share this matching step and differ only in which tokens they merge.
\subsection{Dense Decoders: Hard Prune + Last-Layer Fusion}
\label{sec:hard_prune}
Dense decoders such as DPT \citep{dpt} reassemble the tokens into a 2D map, which merging corrupts. We instead hard-prune the lowest-importance tokens at layer $\ell$, freeze their features, and re-insert them at their original positions wherever the decoder needs. The frozen features miss the updates of the later layers. We use \textbf{Last-Layer Fusion (LLF)} (ex. \cite{token_cropr}) to reduce this drift. It re-inserts the pruned tokens before the final ViT block and runs that block on the full sequence, so they attend to the updated kept tokens. LLF adds no parameters. All dense strategies (Random, ToMe score, and Importance) share this pipeline and differ only in which tokens they remove. We report the additional scale-invariant log error (SILog, $\times 100$) relative to the unpruned prediction.

\subsection{Results}
\label{sec:pruning_results}

\begin{table}[t]
\centering
\caption{Token pruning vs.\ baselines, ImageNet-val @ 224 (mean $\pm$ std, 3 seeds; lower is better). Bold marks the best method per column. Importance outperforms ToMe at every ratio for CLS and at $\eta \geq 0.2$ for DPT.}
\label{tab:cls_pruning}
\renewcommand{\arraystretch}{0.88}
\resizebox{0.82\textwidth}{!}{
\begin{tabular}{lccccccc}
\toprule
Method & $\eta{=}0.05$ & $0.10$ & $0.15$ & $0.20$ & $0.30$ & $0.40$ & $0.50$ \\
\midrule
\multicolumn{8}{l}{\emph{CLS cosine-distance degradation ($\times 100$), DINOv2-B/14}}\\
Random     & 0.17\,\scriptsize$\pm$0.03 & 0.36\,\scriptsize$\pm$0.05 & 0.60\,\scriptsize$\pm$0.08 & 0.86\,\scriptsize$\pm$0.09 & 1.49\,\scriptsize$\pm$0.14 & 2.31\,\scriptsize$\pm$0.22 & 3.44\,\scriptsize$\pm$0.31 \\
ToMe score & 0.67\,\scriptsize$\pm$0.11 & 0.78\,\scriptsize$\pm$0.07 & 0.95\,\scriptsize$\pm$0.07 & 1.12\,\scriptsize$\pm$0.11 & 1.75\,\scriptsize$\pm$0.13 & 2.56\,\scriptsize$\pm$0.25 & 3.82\,\scriptsize$\pm$0.42 \\
Importance  & \textbf{0.001}\,\scriptsize$\pm$0.000 & \textbf{0.007}\,\scriptsize$\pm$0.001 & \textbf{0.024}\,\scriptsize$\pm$0.002 & \textbf{0.060}\,\scriptsize$\pm$0.005 & \textbf{0.25}\,\scriptsize$\pm$0.03 & \textbf{0.80}\,\scriptsize$\pm$0.09 & \textbf{2.09}\,\scriptsize$\pm$0.13 \\
\midrule
\multicolumn{8}{l}{\emph{DPT additional SILog ($\times 100$), Depth-Anything V2 single-stage [4]}}\\
Random     & 4.59\,\scriptsize$\pm$0.09 & 3.70\,\scriptsize$\pm$0.10 & \textbf{3.31}\,\scriptsize$\pm$0.14 & \textbf{3.19}\,\scriptsize$\pm$0.19 & 3.16\,\scriptsize$\pm$0.17 & 3.20\,\scriptsize$\pm$0.15 & 3.29\,\scriptsize$\pm$0.18 \\
ToMe score & \textbf{4.29}\,\scriptsize$\pm$0.07 & \textbf{3.66}\,\scriptsize$\pm$0.09 & 3.51\,\scriptsize$\pm$0.17 & 3.50\,\scriptsize$\pm$0.10 & 3.52\,\scriptsize$\pm$0.13 & 3.67\,\scriptsize$\pm$0.13 & 4.09\,\scriptsize$\pm$0.13 \\
Importance  & 4.88\,\scriptsize$\pm$0.06 & 4.86\,\scriptsize$\pm$0.07 & 3.85\,\scriptsize$\pm$0.11 & 3.28\,\scriptsize$\pm$0.15 & \textbf{3.02}\,\scriptsize$\pm$0.16 & \textbf{3.01}\,\scriptsize$\pm$0.16 & \textbf{3.08}\,\scriptsize$\pm$0.15 \\
\bottomrule
\end{tabular}}
\vspace{-3mm}
\label{tab:dpt_pruning}
\end{table}

Table~\ref{tab:cls_pruning} shows the two settings. On DINOv2 CLS, importance-guided merging has the lowest degradation at every ratio, and at low ratios it is more than an order of magnitude below ToMe scoring. On DPT depth, importance reduces the additional error of ToMe scoring by $25\%$ at $\eta = 0.50$ but loses at the lowest ratios. The CLS result carries over to CLIP and to ImageNet-Sketch, and importance also beats frozen-backbone baselines such as EViT (Appendices~\ref{sec:clip_cls_results} and~\ref{sec:extra_baselines}). ImageNet top-1 accuracy stays between $0.66$ and $0.68$ for every method and ratio ($0.674$ unpruned). For dense depth, the 2-stage [4, 8] schedule beats ToMe scoring at every ratio and resolution on NYU-Depth-V2, by $25$--$35\%$, while single-stage results depend on the dataset. Pruning removes $12.1\%$ of the backbone's theoretical FLOPs at $\eta = 0.20$, but a wall-clock speedup appears only at $448 \times 448$ ($1.12\times$). Appendix~\ref{sec:pruning_details} discusses when importance-guided pruning wins, a non-monotone trend in Table~\ref{tab:dpt_pruning}, and cost.

\section{Conclusion}
\label{sec:conclusion}
\vspace{-3mm}

We studied the geometry a task decoder induces on a frozen ViT feature space, given by the pullback metric $g = J^\top J$. Since g cannot be stored, and for dense outputs J cannot even be formed, we introduced a matrix-free diagnostic, $\kappa_{cap}(r)$ that measures how much of the decoder's sensitivity a rank-r metric can capture, together with a measure of how that sensitivity spreads across tokens. Across DPT, DINOv2, CLIP and VGGT probes, these two numbers sort backbone–decoder pairs into four regimes that predict which architecture is viable, and a VAE bottleneck can be used to move the intractable pairs into a tractable regime. Where the diagnostic allows it, the Spectral Pullback Network learns the metric from randomized power iteration, and a $310$K-parameter importance head distills it into per-token scores that reach $\rho = 0.998$ on DINOv2 CLS. Using those scores for token pruning cuts the added depth error of ToMe-based selection by 25\% at prune ratio 0.5, with the backbone and decoder left frozen. Appendices M and O give limitations and future directions.

\section*{Acknowledgements}
\vspace{-2mm}
This research was supported in part by the KUIS AI Center Research Awards to Andrew Bond. Aykut Erdem acknowledges funding from the TÜBİTAK 2247-A Program under Award 123C550. TB acknowledges support from the UKRI Engineering and Physical Sciences Research Council (EPSRC) through the Future Leaders Fellowship [grant number MR/Y018818/1]. 

\bibliography{neurips_2026}

\begin{thebibliography}{10}

\bibitem{transformer}
Ashish Vaswani, Noam Shazeer, Niki Parmar, Jakob Uszkoreit, Llion Jones, Aidan~N Gomez, {\L}ukasz Kaiser, and Illia Polosukhin.
\newblock Attention is all you need.
\newblock In {\em Advances in Neural Information Processing Systems}, 2017.

\bibitem{vit}
Alexey Dosovitskiy, Lucas Beyer, Alexander Kolesnikov, Dirk Weissenborn, Xiaohua Zhai, Thomas Unterthiner, Mostafa Dehghani, Matthias Minderer, Georg Heigold, Sylvain Gelly, Jakob Uszkoreit, and Neil Houlsby.
\newblock An image is worth 16x16 words: Transformers for image recognition at scale.
\newblock In {\em International Conference on Learning Representations}, 2021.

\bibitem{distillation}
Geoffrey Hinton, Oriol Vinyals, and Jeff Dean.
\newblock Distilling the knowledge in a neural network.
\newblock {\em arXiv preprint arXiv:1503.02531}, 2015.

\bibitem{s2vae}
Andrew Bond, Ilkin~Umut Melanlioglu, Erkut Erdem, and Aykut Erdem.
\newblock Beyond gaussian bottlenecks: Topologically aligned encoding of vision-transformer feature spaces, 2026.

\bibitem{token_cropr}
Benjamin Bergner, Christoph Lippert, and Aravindh Mahendran.
\newblock Token cropr: Faster {ViTs} for quite a few tasks.
\newblock In {\em Proceedings of the IEEE/CVF Conference on Computer Vision and Pattern Recognition}, 2025.

\bibitem{xing_metric}
Eric~P Xing, Andrew~Y Ng, Michael~I Jordan, and Stuart Russell.
\newblock Distance metric learning, with application to clustering with side-information.
\newblock In {\em Advances in Neural Information Processing Systems}, 2002.

\bibitem{lmnn}
Kilian~Q Weinberger and Lawrence~K Saul.
\newblock Distance metric learning for large margin nearest neighbor classification.
\newblock {\em Journal of Machine Learning Research}, 10:207--244, 2009.

\bibitem{contrastive}
Raia Hadsell, Sumit Chopra, and Yann LeCun.
\newblock Dimensionality reduction by learning an invariant mapping.
\newblock In {\em IEEE Conference on Computer Vision and Pattern Recognition}, 2006.

\bibitem{rml_ot}
Christopher Scarvelis and Justin Solomon.
\newblock Riemannian metric learning via optimal transport.
\newblock In {\em The Eleventh International Conference on Learning Representations}, 2023.

\bibitem{taskmet}
Dishank Bansal, Ricky~TQ Chen, Mustafa Mukadam, and Brandon Amos.
\newblock Taskmet: Task-driven metric learning for model learning.
\newblock {\em Advances in Neural Information Processing Systems}, 36:46505--46519, 2023.

\bibitem{geometry_hvae}
Cl{\'e}ment Chadebec, Cl{\'e}ment Mantoux, and St{\'e}phanie Allassonni{\`e}re.
\newblock Geometry-aware hamiltonian variational auto-encoder.
\newblock {\em arXiv preprint arXiv:2010.11518}, 2020.

\bibitem{data_aug_vae}
Cl{\'e}ment Chadebec, Elina Thibeau-Sutre, Ninon Burgos, and St{\'e}phanie Allassonni{\`e}re.
\newblock Data augmentation in high dimensional low sample size setting using a geometry-based variational autoencoder.
\newblock {\em IEEE Transactions on Pattern Analysis and Machine Intelligence}, 45(3):2879--2896, 2022.

\bibitem{latent_space_oddity}
Georgios Arvanitidis, Lars~Kai Hansen, and S{\o}ren Hauberg.
\newblock Latent space oddity: On the curvature of deep generative models.
\newblock In {\em 6th International Conference on Learning Representations, ICLR 2018}, 2018.

\bibitem{geometry_gans}
Binxu Wang and Carlos~R Ponce.
\newblock A geometric analysis of deep generative image models and its applications.
\newblock In {\em International Conference on Learning Representations}, 2021.

\bibitem{understanding_diffusion_models}
Yong-Hyun Park, Mingi Kwon, Jaewoong Choi, Junghyo Jo, and Youngjung Uh.
\newblock Understanding the latent space of diffusion models through the lens of riemannian geometry.
\newblock {\em Advances in Neural Information Processing Systems}, 36:24129--24142, 2023.

\bibitem{natural_gradient}
Shun-ichi Amari.
\newblock Natural gradient works efficiently in learning.
\newblock {\em Neural Computation}, 10(2):251--276, 1998.

\bibitem{info_geometry}
Shun-ichi Amari and Hiroshi Nagaoka.
\newblock {\em Methods of Information Geometry}.
\newblock American Mathematical Society, 2000.

\bibitem{attention_rollout}
Samira Abnar and Willem Zuidema.
\newblock Quantifying attention flow in transformers.
\newblock In {\em Proceedings of the 58th Annual Meeting of the Association for Computational Linguistics}, pages 4190--4197, 2020.

\bibitem{attention_entropy}
Jesse Vig and Yonatan Belinkov.
\newblock Analyzing the structure of attention in a transformer language model.
\newblock In {\em Proceedings of the 2019 ACL Workshop BlackboxNLP}, pages 63--76, 2019.

\bibitem{tome}
Daniel Bolya, Cheng-Yang Fu, Xiaoliang Dai, Peizhao Zhang, Christoph Feichtenhofer, and Judy Hoffman.
\newblock Token merging: Your {ViT} but faster.
\newblock In {\em The Eleventh International Conference on Learning Representations}, 2023.

\bibitem{dynamic_vit}
Yongming Rao, Wenliang Zhao, Benlin Liu, Jiwen Lu, Jie Zhou, and Cho-Jui Hsieh.
\newblock {DynamicViT}: Efficient vision transformers with dynamic token sparsification.
\newblock In {\em Advances in Neural Information Processing Systems}, 2021.

\bibitem{avit}
Hongxu Yin, Arash Vahdat, Jose~M Alvarez, Arun Mallya, Jan Kautz, and Pavlo Molchanov.
\newblock {A-ViT}: Adaptive tokens for efficient vision transformer.
\newblock In {\em Proceedings of the IEEE/CVF Conference on Computer Vision and Pattern Recognition}, 2022.

\bibitem{evit}
Youwei Liang, Chongjian Ge, Zhan Tong, Yibing Song, Jue Wang, and Pengtao Xie.
\newblock Not all patches are what you need: Expediting vision transformers via token reorganizations.
\newblock In {\em International Conference on Learning Representations}, 2022.

\bibitem{dpt}
Ren{\'e} Ranftl, Alexey Bochkovskiy, and Vladlen Koltun.
\newblock Vision transformers for dense prediction.
\newblock In {\em 2021 IEEE/CVF International Conference on Computer Vision (ICCV)}, pages 12159--12168. IEEE, 2021.

\bibitem{lpips}
Richard Zhang, Phillip Isola, Alexei~A Efros, Eli Shechtman, and Oliver Wang.
\newblock The unreasonable effectiveness of deep features as a perceptual metric.
\newblock In {\em 2018 IEEE/CVF conference on computer vision and pattern recognition}, pages 586--595. IEEE, 2018.

\bibitem{metric_learning_review}
Samuel Gruffaz and Josua Sassen.
\newblock A review on riemannian metric learning: Closer to you than you imagine.
\newblock {\em arXiv preprint arXiv:2503.05321}, 2025.

\bibitem{singular_riemannian}
Alessandro Benfenati and Alessio Marta.
\newblock A singular riemannian geometry approach to deep neural networks i. theoretical foundations.
\newblock {\em Neural Networks}, 158:331--343, 2023.

\bibitem{hutchinson_trace}
Michael~F Hutchinson.
\newblock A stochastic estimator of the trace of the influence matrix for laplacian smoothing splines.
\newblock {\em Communications in Statistics-Simulation and Computation}, 18(3):1059--1076, 1989.

\bibitem{slq_algorithm}
Shashanka Ubaru, Jie Chen, and Yousef Saad.
\newblock Fast estimation of tr(f(a)) via stochastic lanczos quadrature.
\newblock {\em SIAM Journal on Matrix Analysis and Applications}, 38(4):1075--1099, 2017.

\bibitem{halko_randomized}
Nathan Halko, Per-Gunnar Martinsson, and Joel~A Tropp.
\newblock Finding structure with randomness: Probabilistic algorithms for constructing approximate matrix decompositions.
\newblock {\em SIAM Review}, 53(2):217--288, 2011.

\bibitem{randomized_nla}
Per-Gunnar Martinsson and Joel~A Tropp.
\newblock Randomized numerical linear algebra: Foundations and algorithms.
\newblock {\em Acta Numerica}, 29:403--572, 2020.

\bibitem{depth_anything_v2}
Lihe Yang, Bingyi Kang, Zilong Huang, Zhen Zhao, Xiaogang Xu, Jiashi Feng, and Hengshuang Zhao.
\newblock Depth anything {V2}.
\newblock In {\em Advances in Neural Information Processing Systems}, 2024.

\bibitem{dinov2}
Maxime Oquab, Timoth{\'e}e Darcet, Th{\'e}o Moutakanni, et~al.
\newblock {DINOv2}: Learning robust visual features without supervision.
\newblock {\em Transactions on Machine Learning Research}, 2024.

\bibitem{vggt}
Jianyuan Wang, Minghao Chen, Nikita Karaev, Andrea Vedaldi, Christian Rupprecht, and David Novotny.
\newblock {VGGT}: Visual geometry grounded transformer.
\newblock In {\em Proceedings of the IEEE/CVF Conference on Computer Vision and Pattern Recognition}, 2025.

\bibitem{saad_iterative}
Yousef Saad.
\newblock {\em Iterative Methods for Sparse Linear Systems}.
\newblock SIAM, 2 edition, 2003.

\bibitem{imagenet}
Jia Deng, Wei Dong, Richard Socher, Li-Jia Li, Kai Li, and Li~Fei-Fei.
\newblock {ImageNet}: A large-scale hierarchical image database.
\newblock In {\em IEEE Conference on Computer Vision and Pattern Recognition}, pages 248--255, 2009.

\bibitem{power_spherical}
Nicola De~Cao and Wilker Aziz.
\newblock The power spherical distribution.
\newblock In {\em Proceedings of the 37th International Conference on Machine Learning, INNF+ Workshop}, 2020.

\bibitem{eckart_young}
Carl Eckart and Gale Young.
\newblock The approximation of one matrix by another of lower rank.
\newblock {\em Psychometrika}, 1(3):211--218, 1936.

\bibitem{mirsky}
Leon Mirsky.
\newblock Symmetric gauge functions and unitarily invariant norms.
\newblock {\em The Quarterly Journal of Mathematics}, 11(1):50--59, 1960.

\bibitem{vershynin}
Roman Vershynin.
\newblock {\em High-Dimensional Probability: An Introduction with Applications in Data Science}.
\newblock Cambridge University Press, 2018.

\bibitem{davis_kahan}
Chandler Davis and William~M Kahan.
\newblock The rotation of eigenvectors by a perturbation. {III}.
\newblock {\em SIAM Journal on Numerical Analysis}, 7(1):1--46, 1970.

\bibitem{stewart_sun}
G.~W. Stewart and Ji-Guang Sun.
\newblock {\em Matrix Perturbation Theory}.
\newblock Academic Press, 1990.

\bibitem{imagenet_sketch}
Haohan Wang, Songwei Ge, Zachary Lipton, and Eric~P Xing.
\newblock Learning robust global representations by penalizing local predictive power.
\newblock In {\em Advances in Neural Information Processing Systems}, 2019.

\bibitem{nyu_depth}
Nathan Silberman, Derek Hoiem, Pushmeet Kohli, and Rob Fergus.
\newblock Indoor segmentation and support inference from {RGBD} images.
\newblock In {\em European Conference on Computer Vision}, pages 746--760, 2012.

\bibitem{clip}
Alec Radford, Jong~Wook Kim, Chris Hallacy, et~al.
\newblock Learning transferable visual models from natural language supervision.
\newblock In {\em International Conference on Machine Learning}, pages 8748--8763, 2021.

\bibitem{flow_matching}
Yaron Lipman, Ricky T.~Q. Chen, Heli Ben-Hamu, Maximilian Nickel, and Matthew Le.
\newblock Flow matching for generative modeling.
\newblock In {\em International Conference on Learning Representations}, 2023.

\bibitem{riemannian_flow_matching}
Ricky T.~Q. Chen and Yaron Lipman.
\newblock Flow matching on general geometries.
\newblock In {\em International Conference on Learning Representations}, 2024.

\bibitem{integrated_gradients}
Mukund Sundararajan, Ankur Taly, and Qiqi Yan.
\newblock Axiomatic attribution for deep networks.
\newblock In {\em International Conference on Machine Learning}, pages 3319--3328, 2017.

\bibitem{smoothgrad}
Daniel Smilkov, Nikhil Thorat, Been Kim, Fernanda Vi{\'e}gas, and Martin Wattenberg.
\newblock {SmoothGrad}: removing noise by adding noise.
\newblock {\em arXiv preprint arXiv:1706.03825}, 2017.

\bibitem{energy_ood}
Weitang Liu, Xiaoyun Wang, John~D Owens, and Yixuan Li.
\newblock Energy-based out-of-distribution detection.
\newblock In {\em Advances in Neural Information Processing Systems}, 2020.

\bibitem{mahalanobis_ood}
Kimin Lee, Kibok Lee, Honglak Lee, and Jinwoo Shin.
\newblock A simple unified framework for detecting out-of-distribution samples and adversarial attacks.
\newblock In {\em Advances in Neural Information Processing Systems}, 2018.

\bibitem{sinkhorn}
Marco Cuturi.
\newblock Sinkhorn distances: Lightspeed computation of optimal transport.
\newblock In {\em Advances in Neural Information Processing Systems}, 2013.

\end{thebibliography}
\bibliographystyle{unsrt}

\newpage
\appendix

\section{Riemannian Geometry}
\label{sec:riemannian_review}
We collect the Riemannian-geometry background used in the main text, in enough depth that a reader with basic linear algebra and calculus can follow the proofs in Appendix~\ref{sec:theorem_proofs}. A more thorough treatment can be found in \cite{metric_learning_review} or any standard text on differential geometry. We state the definitions formally, but each is followed by an \emph{Intuition} note that reduces it to the linear algebra it amounts to in our setting, for those who only want the general idea and do not need the full formalism. We assume multivariable calculus and linear algebra, but no differential geometry or point-set topology. 

\subsection{Smooth Manifolds and Tangent Spaces}
A \textbf{smooth manifold} $\mathcal{M}$ of dimension $d$ is a Hausdorff topological space equipped with an atlas of charts $\{(U_\alpha, \varphi_\alpha)\}$, where each $U_\alpha \subseteq \mathcal{M}$ is open and each $\varphi_\alpha : U_\alpha \to \mathbb{R}^d$ is a homeomorphism, such that the transition maps $\varphi_\beta \circ \varphi_\alpha^{-1}$ are smooth wherever defined. Familiar examples are $\mathbb{R}^n$ itself, the $n$-sphere $S^n$, and matrix groups such as $\mathrm{SO}(n)$. In our setting, the manifold of interest is the ViT feature space at probe layer $\ell$, taken as the open set $\mathbb{R}^{N \times D}$ with the global chart $\varphi(F^\ell) = \mathrm{vec}(F^\ell)$.

\emph{Intuition.} The definition says that $\mathcal{M}$ can be covered by coordinate systems, each identifying a neighbourhood with an open subset of $\mathbb{R}^d$, and that the change-of-coordinate maps are smooth, so differentiation does not depend on which coordinates are used. This means that a region around a given point looks like $\mathbb{R}^d$, and we can smoothly switch between two regions while preserving smoothness. Our $\mathcal{M}$ is the vector space $\mathbb{R}^{N \times D}$ with the single global chart $\mathrm{vec}(\cdot)$, which is the degenerate case: one coordinate system, no transition maps, and every construction below reduces to linear algebra on the vectorized feature tensor. The manifold language buys us one thing, namely a metric that is allowed to vary from point to point.

At each point $p \in \mathcal{M}$, the \textbf{tangent space} $T_p\mathcal{M}$ is a $d$-dimensional real vector space whose elements are derivations of smooth real-valued functions at $p$. Concretely, an element $v \in T_p\mathcal{M}$ is the velocity $\dot\gamma(0)$ of some smooth curve $\gamma : (-\varepsilon, \varepsilon) \to \mathcal{M}$ with $\gamma(0) = p$. Two curves give the same tangent vector iff they have the same first-order Taylor expansion in any chart. The disjoint union $T\mathcal{M} = \bigsqcup_p T_p\mathcal{M}$ is itself a smooth manifold, the \textbf{tangent bundle}.

\emph{Intuition.} $T_p\mathcal{M}$ is the space of first-order displacements at $p$, i.e.\ the directions along which a function can be differentiated there. For a vector space it is canonically identified with the space itself, so a tangent vector at $F^\ell$ is a perturbation $\Delta F \in \mathbb{R}^{N \times D}$ of the same shape, and the curve $\gamma(t) = F^\ell + t\,\Delta F$ gives it as a velocity. The definition through derivations exists because a general manifold has no ambient space in which to take differences of nearby points.

A smooth map $\phi : \mathcal{M} \to \mathcal{N}$ between manifolds induces a \textbf{differential} (or pushforward) at each $p$, $d\phi_p : T_p\mathcal{M} \to T_{\phi(p)}\mathcal{N}$, defined by $d\phi_p(v) = \dot{(\phi \circ \gamma)}(0)$ for any curve $\gamma$ with $\dot\gamma(0) = v$. In coordinates this is the Jacobian matrix.

\emph{Intuition.} The differential is the linear map that best approximates $\phi$ near $p$, in the sense that $\phi(p + \epsilon v) = \phi(p) + \epsilon\, d\phi_p(v) + O(\epsilon^2)$. In coordinates it is the Jacobian, and applying it to a single direction is a Jacobian-vector product, the primitive our algorithms are built on.

\subsection{Riemannian Metrics, Lengths, and Geodesics}
A \textbf{Riemannian metric} $g$ on $\mathcal{M}$ assigns to each $p$ an inner product $g_p : T_p\mathcal{M} \times T_p\mathcal{M} \to \mathbb{R}$ that varies smoothly with $p$. In a chart $(U, \varphi)$ with coordinates $x^1, \dots, x^d$, $g$ is represented by a smoothly varying symmetric positive-definite (SPD) matrix $g_{ij}(p)$, with $g_p(u, v) = \sum_{ij} g_{ij}(p)\, u^i v^j$. We use the same letter $g$ for the metric and its coordinate matrix, $g(F) \in \mathbb{R}^{ND \times ND}$ in our case.

\emph{Intuition.} A Riemannian metric is a field of quadratic forms. At each point it is a Mahalanobis-style inner product with an SPD matrix $g_{ij}(p)$, so the squared length of a perturbation $v$ is $v^\top g(p)\, v$. It departs from the Euclidean case in two ways: the matrix need not be the identity, so directions have unequal cost and can be correlated, and it depends on $p$, so the same perturbation costs different amounts at different points. For $g = J^\top J$ the eigenvectors are the right singular vectors of $J$ and the eigenvalues are $\sigma_j^2$, so length is measured in units of output change.

The metric defines an infinitesimal notion of length on $\mathcal{M}$: a smooth curve $\gamma : [0, 1] \to \mathcal{M}$ has \textbf{length}
\begin{equation}
    L(\gamma) \;=\; \int_0^1 \sqrt{g_{\gamma(t)}\big(\dot\gamma(t), \dot\gamma(t)\big)}\, dt.
\end{equation}
The \textbf{Riemannian distance} $d_g(p, q)$ between two points is then the infimum of $L(\gamma)$ over piecewise-smooth curves with $\gamma(0) = p$ and $\gamma(1) = q$. This $d_g$ satisfies all metric-space axioms whenever $\mathcal{M}$ is connected and $g$ is positive definite. If $g$ is only positive semidefinite, as for the pullback metrics below, $d_g$ is a pseudometric, meaning that distinct points can be at distance zero.

\emph{Intuition.} The length integrates the local norm $\|\dot\gamma(t)\|_{g(\gamma(t))}$ along the curve, and the distance is the infimum over connecting curves. If $g$ were constant, say $g = L^\top L$, the integral would collapse to $d_g(p, q) = \|L(p - q)\|_2$ and the minimizers would be straight lines, which is the familiar Mahalanobis distance. The challenge of the Riemannian case is that $g$ varies with position, so the minimizer generally bends away from the straight line towards regions where the metric is cheap.

A \textbf{geodesic} is a critical point of $L$ in the calculus-of-variations sense. Equivalently, it satisfies the Euler--Lagrange (geodesic) equation
\begin{equation}
    \ddot\gamma^{\,k} + \Gamma^k_{ij}(\gamma)\, \dot\gamma^{\,i} \dot\gamma^{\,j} \;=\; 0,
    \qquad
    \Gamma^k_{ij}(p) \;=\; \tfrac{1}{2}\, g^{k\ell}(p)\!\left(\partial_i g_{j\ell} + \partial_j g_{i\ell} - \partial_\ell g_{ij}\right)\!(p),
\end{equation}
where $g^{k\ell}$ is the inverse metric tensor and $\Gamma^k_{ij}$ are the \textbf{Christoffel symbols}. Geodesics generalize straight lines. In flat $\mathbb{R}^n$ with the Euclidean metric, $\Gamma^k_{ij} \equiv 0$ and the geodesic equation reduces to $\ddot\gamma = 0$.

\emph{Intuition.} Geodesics are the stationary points of the length functional, so the geodesic equation is the Euler--Lagrange condition of that variational problem. It is a second-order ODE in the coordinates of the curve, and the Christoffel symbols are built from first derivatives of $g$, which is why they vanish when $g$ is constant and the solutions are then straight lines. Integrating this ODE is what makes exact geodesic distance impractical at $ND \approx 2 \times 10^5$, and it is why Appendix~\ref{sec:task_retrieval} uses a first-order surrogate.

The \textbf{exponential map} at $p$, $\exp_p : T_p\mathcal{M} \to \mathcal{M}$, sends a tangent vector $v$ to $\gamma_v(1)$, where $\gamma_v$ is the unique geodesic with $\gamma_v(0) = p$ and $\dot\gamma_v(0) = v$. On a sufficiently small neighbourhood, $\exp_p$ is a diffeomorphism, and its inverse $\log_p$ provides \textbf{normal coordinates} centred at $p$. In these coordinates the metric looks Euclidean to first order, with $g_{ij}(p) = \delta_{ij}$ and $\partial_k g_{ij}(p) = 0$.

\emph{Intuition.} The exponential map integrates the geodesic ODE for one unit of time from initial position $p$ and initial velocity $v$, and $\log_p$ inverts it near $p$. In the resulting coordinates $g(p) = I$ and all first derivatives of $g$ vanish at $p$, so the metric agrees with the Euclidean one to first order and any deviation is second order. This is the geometric counterpart of a first-order Taylor expansion.

\subsection{Pullback Metrics}
The construction underlying our entire framework is the pullback metric. Given a smooth map $\phi : \mathcal{M} \to (\mathcal{N}, h)$ from a manifold without a specified metric to a Riemannian manifold, the \textbf{pullback metric} $\phi^* h$ on $\mathcal{M}$ is defined pointwise by
\begin{equation}
    (\phi^* h)_p(u, v) \;=\; h_{\phi(p)}\!\left(d\phi_p(u),\, d\phi_p(v)\right), \qquad u, v \in T_p\mathcal{M}.
\end{equation}
When $\phi$ is an immersion (i.e.\ $d\phi_p$ has full column rank everywhere), $\phi^* h$ is positive definite and therefore defines a Riemannian metric on $\mathcal{M}$. When $d\phi_p$ has a nontrivial kernel, $\phi^* h$ is only a positive-semidefinite tensor, and its degenerate directions are exactly those that the map collapses.

\paragraph{Singular metrics.} Most properties we wish to use (local sensitivity, geodesic distance, etc.) still hold for singular Riemannian metrics, with the caveat that moving in the direction of the kernel at a given point doesn't change anything about the output. For more information about singular Riemannian geometry in the context of deep learning, see \cite{singular_riemannian}.

\emph{Intuition.} The pullback is the quadratic form obtained by pushing a perturbation through the map and measuring it in the output, $v \mapsto h(d\phi_p(v), d\phi_p(v))$. It is a first-order object: it reports the squared output change caused by an infinitesimal input perturbation. Directions in $\ker d\phi_p$ produce no output change and receive zero length, which is why the pullback is only positive semidefinite unless $d\phi_p$ is injective.

In the Euclidean case, $\mathcal{N} = \mathbb{R}^M$ with the standard Euclidean metric. Writing $\phi$ in coordinates and using $J = J_\phi(p) \in \mathbb{R}^{M \times d}$ for its Jacobian, the pullback metric is
\begin{equation}
    (\phi^* h)_{ij}(p) \;=\; \sum_a \partial_i \phi^a(p)\, \partial_j \phi^a(p) \;=\; \big[J^\top J\big]_{ij}.
\end{equation}
A perturbation direction $v$ gets length $\sqrt{v^\top J^\top J v} = \|Jv\|_2$ under the pullback. Directions $v \in \ker J$ are collapsed to zero by $\phi$ and have zero pullback length, and directions aligned with the top right singular vectors of $J$ have the largest pullback length.

\emph{Intuition.} With a Euclidean output metric the pullback is $J^\top J$, the Gram matrix of the Jacobian, and the length of $v$ is $\|Jv\|_2$. Writing $J = U \Sigma V^\top$, the eigenvectors of $J^\top J$ are the columns of $V$ and its eigenvalues are $\sigma_j^2$, so $v^\top J^\top J v = \sum_j \sigma_j^2 \langle v_j, v \rangle^2$. The rest of the paper follows from this decomposition: the top-$r$ terms carry most of the output response, the tail contributes little, and $\kappa_{cap}(r)$ is the fraction they account for.

\subsection{Specialization to ViT Probe Maps}
\label{sec:pullback_vit}
In our setting, $\mathcal{M} = \mathbb{R}^{N \times D}$ is the feature space at probe layer $\ell$, $\mathcal{N} = \mathbb{R}^M$ is the task-output space, and the smooth map is the probe map $\psi_\ell : \mathcal{M} \to \mathcal{N}$, which composes the remaining ViT layers, the task decoder, and optionally the task metric. The pullback of the Euclidean metric on $\mathbb{R}^M$ through $\psi_\ell$ is the ViT pullback metric of the main paper, $g(F) = J_{\psi_\ell}(F)^\top J_{\psi_\ell}(F)$. Because $M$ may be much smaller than $ND$ (e.g.\ $M = 9$ for VGGT camera pose, against $ND \approx 2\times 10^{5}$), $J_{\psi_\ell}$ typically has a nontrivial kernel, and $g(F)$ is positive semidefinite rather than strictly positive. The directions in $\ker J_{\psi_\ell}$ are the feature-space perturbations that leave the task output unchanged to first order, i.e.\ the directions the decoder ignores. We work on the ambient space $\mathbb{R}^{N \times D}$ with its global chart and make no assumption that the features lie on a lower-dimensional submanifold.

\paragraph{Why a metric and not just a Jacobian.} In principle, all of our diagnostics could be phrased directly in terms of $J$. We work with the pullback $g = J^\top J$ instead because (i)~$g$ lives intrinsically on the feature space and is invariant to post-composition of $\phi$ with isometries of $\mathbb{R}^M$, (ii)~its eigendecomposition gives canonical directions and energies, and (iii)~it makes the connection to classical Riemannian metric learning explicit, allowing direct comparison with explicit-SPD methods such as those of \cite{rml_ot, taskmet}. The role of the rank-$r$ approximation $g_r$ is then to truncate $g$ to its most informative directions while keeping it a (positive-semidefinite) metric tensor.

\paragraph{Choice of output metric.} We have currently formulated everything in terms of the Euclidean output metric $J^\top J$, rather than a generic pullback $J^\top H J$. However, this is not required for our method. For any positive-definite output metric $H = LL^\top$, the $H$-weighted pullback is $J^\top H J = (LJ)^\top (LJ)$, which is the Euclidean pullback of the decoder post-composed with the fixed linear map $L$. The diagnostic, SPN, and importance head therefore apply unchanged with $J$ replaced by $LJ$, at the cost of one extra matrix-vector product per JVP or VJP. For heterogeneous outputs such as camera pose, the principled choice is a block $H$ that respects the geometry of rotations and puts translation and rotation on comparable scales. On the VGGT camera head we compared the Euclidean metric with such an SO(3)-aware, block-scaled $H$. This $H$ uses the linearized geodesic metric on rotations, removes the non-physical quaternion-norm direction, and rescales translation and rotation to comparable units. Under this new version, tractability remains unchanged ($\kappa_{cap} = 0.99$ under both), per-token importance rankings correlate at Spearman $\rho = 0.995$, and the $50\%$-prune token sets agree on $97\%$ of tokens. We thus use the Euclidean metric in the rest of the paper, and leave exploration of $H$ to future work.

\section{Tractability Algorithms}
\label{sec:algorithms}
We expand on the two estimators used in Algorithm~\ref{alg:hutchinson}: the Hutchinson trace estimator and Stochastic Lanczos Quadrature (SLQ).

\subsection{Hutchinson Trace Estimator}
For symmetric $A \in \mathbb{R}^{n \times n}$ accessed only via matrix-vector products, the Hutchinson estimator \cite{hutchinson_trace} is
\[
    \hat T_m = \tfrac{1}{m} \sum_{i=1}^m z_i^\top A z_i, \qquad z_i \stackrel{\text{iid}}{\sim} \mathcal{D},\ \mathbb{E}[z_i] = 0,\ \mathbb{E}[z_i z_i^\top] = I.
\]
It has been shown that $\mathbb{E}[\hat T_m] = \mathrm{Tr}(A)$ and $\mathrm{Var}(\hat T_m) = \tfrac{1}{m}(\mathbb{E}[z^\top A z]^2 - \mathrm{Tr}(A)^2)$. For Rademacher probes ($z_i \in \{\pm 1\}^n$ iid), the variance is $\tfrac{2}{m}(\|A\|_F^2 - \sum_i A_{ii}^2)$. For SPD matrices (which is our focus), Chernoff-type bounds give exponential concentration: $\Pr[|\hat T_m - \mathrm{Tr}(A)| > \varepsilon\,\mathrm{Tr}(A)] \leq 2 \exp(-c m \varepsilon^2)$ for a constant $c$ depending on the conditioning. We use $m = 100$ Rademacher probes throughout, giving relative error $\lesssim 5\%$ in our experiments.

\subsection{Stochastic Lanczos Quadrature (SLQ)}
SLQ \citep{slq_algorithm} computes $\mathrm{Tr}\,f(A)$ for analytic $f$ via the identity $\mathrm{Tr}\,f(A) = \sum_i f(\lambda_i) = \sum_i e_i^\top f(A) e_i$, replacing the standard basis with random probes and approximating $f(A)v$ by Lanczos-tridiagonalisation. The convergence rate is exponential in the number of Lanczos steps $k$ and independent of the matrix dimension. We use $k = 30$ Lanczos steps with $m = 100$ probes for the full-spectrum effective rank $r_{\text{eff}}^{\text{SLQ}}$.

\subsection{Numerical Verification}
At $m = 100$, the per-image standard deviation of $\widehat{\kappa_{cap}(20)}$ is $\leq 0.02$ across 50 ImageNet-val images for all configurations in Table~\ref{tab:regime_taxonomy} (Appendix~\ref{sec:full_sweep_results} gives the full sweep). The truncated effective rank $r_{\text{eff}}^{\text{trunc}}$ stabilizes within $1\%$ at $r = 20$ and $q = 2$ for all proceed-decision configurations.

\subsection{Algorithm Pseudocode}
\label{sec:algorithm_pseudocode}

\begin{algorithm}[h]
\caption{Hutchinson Tractability Diagnostic (referenced in \S\ref{sec:tractability})}
\label{alg:hutchinson}
\begin{algorithmic}[1]
\Require Backbone, decoder $\phi$, probe layer $\ell$, rank $r$, Hutchinson samples $m$, power-iteration steps $q$
\Ensure $\kappa_{cap}(r)$, $r_{\text{eff}}^{\text{trunc}}$, $\mathrm{CV}$, decision
\State Build probe map $\psi_\ell$ by composing blocks $\ell{+}1,\dots,L$ with $\phi$ (and optionally $\delta$)
\State Wrap in \textsc{JacobianOperator} $\mathcal{J}$ (JVP/VJP interface only)
\State $S, V \gets \textsc{RandomizedSVD}(\mathcal{J}, r, q)$ \Comment{$O(rq)$ JVP+VJP pairs}
\State $\widehat{\|J\|_F^2} \gets \tfrac{1}{m}\sum_{i=1}^{m} z_i^\top \mathcal{J}^\top \mathcal{J} z_i,\ z_i \sim \mathrm{Rademacher}^{ND}$ \Comment{$O(m)$ JVP+VJP pairs}
\State $\kappa_{cap}(r) \gets \sum_{j} S_j^2 / \widehat{\|J\|_F^2}$
\State $r_{\text{eff}}^{\text{trunc}} \gets \exp\!\left(-\sum_{j} \tilde p_j \log \tilde p_j\right),\ \tilde p_j = S_j^2 / \sum_k S_k^2$
\State $\mathrm{CV} \gets \tfrac{1}{r}\sum_{j} \mathrm{std}_t(\|V_j[t]\|_2) / \mathrm{mean}_t(\|V_j[t]\|_2)$
\Return $\kappa_{cap}(r)$
\end{algorithmic}
\end{algorithm}

\begin{algorithm}[h]
\caption{RandNLA Training Step (referenced in \S\ref{sec:randnla_training})}
\label{alg:randnla}
\begin{algorithmic}[1]
\Require Probe map $\psi_\ell$, SPN $g_\theta$, rank $r$, oversample $r_0 \geq r$, power-iteration steps $q$, features $F$, energy weight $\alpha$
\State Draw $\Omega \in \mathbb{R}^{ND \times r_0}$ i.i.d.\ $\mathcal{N}(0,1)$
\For{$i = 1,\dots,q$}
    \State $\Omega \gets J^\top(J\, \Omega)$ \Comment{one VJP then one JVP per column; never form $J$}
    \State $\Omega \gets \mathrm{QR}(\Omega)$
\EndFor
\State $\widehat{V}_r \gets$ leading $r$ columns of $\Omega$ \Comment{aligned with $\mathrm{span}(V_r^*)$}
\State $\widehat{S}_j^2 \gets \|J\,\hat v_j\|^2$ for $j = 1,\dots,r$ \Comment{natural energy estimates}
\State $\mathcal{L}_{\text{sub}} \gets 1 - \tfrac{1}{r}\|U(F;\theta)^\top \widehat V_r\|_F^2$ \Comment{Grassmannian subspace loss}
\State $\mathcal{L}_{\text{en}} \gets \mathrm{MSE}\!\big(\log \lambda(F;\theta),\, \log \widehat{S}^2\big)$
\State Update $\theta$ on $\mathcal{L}_{\text{sub}} + \alpha\, \mathcal{L}_{\text{en}}$
\end{algorithmic}
\end{algorithm}

\section{Proofs of Theorems}
\setcounter{theorem}{0}
\label{sec:theorem_proofs}
This appendix collects formal statements and proofs of the four results referenced in the main text: the Captured Sensitivity Ceiling, Convergence of Power Iteration, the Token Pruning Error Bound, and Impossibility of Factored Metrics.

\subsection{Theorem 1: Captured Sensitivity Ceiling}

\begin{theorem}[Captured Sensitivity and Rank-$r$ Approximations]
\label{thm:captured_sensitivity}\label{prop:tractability}
    Let $J = U_J \Sigma V^\top$ with $\sigma_1 \geq \dots \geq \sigma_p \geq 0$, and define $g_r(F) = \sum_{j=1}^r \sigma_j^2 v_j v_j^\top + \varepsilon I$ for $\varepsilon \geq 0$. Among all approximations of the form $\tilde g = \tilde U \tilde \Lambda \tilde U^\top + \varepsilon I$ with $\tilde U \in \mathbb{R}^{ND \times r}$ orthonormal and $\tilde\Lambda \succ 0$ diagonal, $g_r$ minimises $\|g(F) - \tilde g\|_F$ in the limit $\varepsilon \to 0$ (for fixed $\varepsilon > 0$ the minimizing energies are $\sigma_j^2 - \varepsilon$). When $\sigma_r > \sigma_{r+1}$, the minimizing subspace $\mathrm{span}(V_r^*)$ is unique. Furthermore, for $v \sim \mathrm{Uniform}(\mathbb{S}^{ND-1})$ and $\varepsilon \to 0$,
    \begin{equation}
    \label{eq:sensitivity_ratio}
        \frac{\mathbb{E}_v[v^\top g_r(F) v]}{\mathbb{E}_v[v^\top g(F) v]} \;=\; \kappa_{cap}(r) \;:=\; \frac{\sum_{j=1}^r \sigma_j^2}{\|J\|_F^2}.
    \end{equation}
    No compression $P g(F) P$ of $g$ onto an $r$-dimensional subspace ($P$ a rank-$r$ orthogonal projector) achieves a larger ratio, so $\kappa_{cap}(r)$ is both the quality of $g_r$ and the ceiling for any such rank-$r$ restriction. Both quantities defining $\kappa_{cap}(r)$ can be computed to any accuracy using $\mathcal{O}(m + rq)$ JVP/VJP pairs.
\end{theorem}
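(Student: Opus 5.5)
The plan is to reduce every claim to the spectral decomposition $g(F) = J^\top J = V \Sigma^2 V^\top$ and then apply classical results, in four parts.

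\textbf{Frobenius optimality.} Write $\|g - \tilde g\|_F = \|(g - \varepsilon I) - \tilde U \tilde\Lambda \tilde U^\top\|_F$. The matrix $g - \varepsilon I$ is symmetric with eigenvalues $\sigma_j^2 - \varepsilon$, and $\tilde U \tilde\Lambda \tilde U^\top$ ranges over symmetric rank-$\le r$ matrices with positive eigenvalues. Eckart--Young--Mirsky \citep{eckart_young, mirsky} says the best rank-$r$ Frobenius approximant keeps the $r$ eigenvalues of largest magnitude together with their eigenvectors.

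For small $\varepsilon$ (below $\sigma_r^2/2$, and assuming $\sigma_r > 0$) these are the top-$r$ eigenpairs, with energies $\sigma_j^2 - \varepsilon$. This gives the parenthetical claim. As $\varepsilon \to 0$ the minimiser tends to $g_r$. The positivity constraint $\tilde\Lambda \succ 0$ is inactive because the optimal energies are positive.

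Uniqueness when $\sigma_r > \sigma_{r+1}$ follows from the equality case of Mirsky's inequality, or equivalently from von Neumann's trace inequality. If the retained eigenvalues are not exactly the top $r$, the error strictly exceeds $\sum_{j>r} (\sigma_j^2)^2$. So the optimal eigenspace must equal $\mathrm{span}(v_1, \dots, v_r)$.

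\textbf{Sensitivity ratio.} Use $\mathbb{E}_v[v v^\top] = I/(ND)$ for $v$ uniform on the sphere. Then $\mathbb{E}_v[v^\top A v] = \mathrm{Tr}(A)/(ND)$ for any $A$, and the ratio equals $\mathrm{Tr}(g_r)/\mathrm{Tr}(g)$. With $\varepsilon \to 0$ this is
\[
\frac{\sum_{j \le r} \sigma_j^2}{\sum_j \sigma_j^2} = \frac{\sum_{j \le r} \sigma_j^2}{\|J\|_F^2} = \kappa_{cap}(r).
\]

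\textbf{Ceiling over projectors.} For any rank-$r$ orthogonal projector $P = QQ^\top$ with $Q^\top Q = I_r$, the same identity gives the ratio $\mathrm{Tr}(PgP)/\mathrm{Tr}(g) = \mathrm{Tr}(Q^\top g Q)/\mathrm{Tr}(g)$. Ky Fan's maximum principle gives $\max_Q \mathrm{Tr}(Q^\top g Q) = \sum_{j \le r} \lambda_j(g) = \sum_{j \le r} \sigma_j^2$, attained at $Q = V_r$. I expect this to be the step needing the most care, but it is standard: it follows from the Courant--Fischer characterisation, or from writing $\mathrm{Tr}(Q^\top g Q) = \sum_j \sigma_j^2 \|Q^\top v_j\|^2$. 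The weights $w_j = \|Q^\top v_j\|^2$ lie in $[0,1]$ and sum to at most $r$, and a linear-programming argument then shows the maximum is $\sum_{j \le r} \sigma_j^2$.

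\textbf{Computability.} $\|J\|_F^2 = \mathrm{Tr}(J^\top J)$ is estimated by Hutchinson with $m$ Rademacher probes, each costing one JVP and one VJP. It is unbiased and concentrates as $m$ grows (Appendix~\ref{sec:algorithms}).

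For the numerator, estimate $\sum_{j \le r} \sigma_j^2$ by $\sum_j \|J \hat v_j\|^2$ from randomized block power iteration with $q$ rounds, at $O(rq)$ products. Its accuracy as $q$ grows follows from the convergence result, Theorem~\ref{thm:convergence}. Together with a perturbation bound on the Ky Fan sum, this makes the estimate arbitrarily accurate.

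Since "to any accuracy" holds only in the limit $m, q \to \infty$, or with high probability, I would state that part as an asymptotic and probabilistic guarantee rather than an exact one.
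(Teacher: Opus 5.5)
Your proposal is correct and follows the paper's proof step for step: Eckart--Young--Mirsky for Frobenius optimality, the isotropy identity $\mathbb{E}_v[v^\top A v] = \mathrm{Tr}(A)/ND$ for the ratio, Ky Fan's maximum principle for the projector ceiling, and Hutchinson plus randomized power iteration for computability. Your treatment of the floor is more careful than the paper's, which simply ignores $\varepsilon I$: you apply Eckart--Young--Mirsky to the shifted matrix $g - \varepsilon I$ (valid for $0 < \varepsilon < \sigma_r^2/2$ with $\sigma_r > 0$) to justify the energies $\sigma_j^2 - \varepsilon$ (one small slip: at fixed $\varepsilon$ the optimal error is $\sum_{j>r}(\sigma_j^2-\varepsilon)^2$ over all $ND-r$ discarded eigenvalues, not $\sum_{j>r}\sigma_j^4$).
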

\begin{proof}
    \emph{Frobenius optimality.} $g(F) = J^\top J$ is PSD with eigenvalues $\sigma_j^2$ and eigenvectors $v_j$. The Eckart--Young--Mirsky theorem \citep{eckart_young, mirsky} gives that the minimizing PSD rank-$r$ approximation (ignoring $\varepsilon I$) is $\sum_{j=1}^r \sigma_j^2 v_j v_j^\top = g_r$. Uniqueness of the subspace follows from $\sigma_r^2 > \sigma_{r+1}^2$.

    \emph{Sensitivity ratio.} For any symmetric $A$ and $v \sim \mathrm{Uniform}(\mathbb{S}^{ND-1})$, isotropy gives $\mathbb{E}_v[v^\top A v] = \mathrm{Tr}(A)/ND$. Therefore $\mathbb{E}_v[v^\top g(F) v] = \|J\|_F^2 / ND$ and $\mathbb{E}_v[v^\top g_r(F) v] = (\sum_{j=1}^r \sigma_j^2)/ND + \varepsilon$. Dividing and taking $\varepsilon \to 0$ gives Equation~\ref{eq:sensitivity_ratio}. By Ky Fan's maximum principle, $\mathrm{Tr}(P g(F) P) \leq \sum_{j=1}^r \sigma_j^2$ for every rank-$r$ orthogonal projector $P$, with equality for the projector onto $\mathrm{span}(V_r^*)$, so no such restriction achieves a higher ratio. (Without this restriction the ratio is unbounded, since scaling any $\tilde g$ scales its trace.)

    \emph{Computability.} The denominator $\|J\|_F^2 = \mathrm{Tr}(J^\top J)$ is estimated by Hutchinson's estimator with $m$ Rademacher probes, $\hat T = \tfrac{1}{m}\sum_i z_i^\top (J^\top J) z_i$, with sub-Gaussian concentration error $\mathcal{O}(1/\sqrt{m})$ \citep{hutchinson_trace}. The numerator $\sum_{j=1}^r \sigma_j^2$ is obtained from a rank-$r$ randomized SVD with $q$ power-iteration steps \citep{halko_randomized}. Each round costs $r$ JVPs and $r$ VJPs, for a total of $\mathcal{O}(rq)$ JVP/VJP pairs. The tail beyond rank $r$ contributes the remaining fraction $1 - \kappa_{cap}(r)$ of the trace.
\end{proof}

\subsection{Theorem 2: Impossibility of Factored Metrics}

\begin{theorem}[Impossibility of Factored Metrics]
\label{thm:impossibility_formal}
A metric $g_\mathcal{F}$ is \emph{factored} if $v^\top g_\mathcal{F}(F) v = \sum_{t=1}^N \varphi_t(F) \|v_t\|^2$ for some measurable $\varphi_t : \mathbb{R}^{N \times D} \to \mathbb{R}_{\geq 0}$. A unit vector $v^* \in \mathbb{R}^{ND}$ is \emph{$\zeta$-delocalized} if $\|v_t^*\|^2 \in [(1-\zeta)/N, (1+\zeta)/N]$ for all $t$. For any distribution $P$ on $\mathbb{R}^{N \times D}$ with $0 < \mathbb{E}_P[\varphi_t(F)] < \infty$, any $\zeta$-delocalized $v^*$, and $v_{\text{rand}} \sim \mathrm{Uniform}(\mathbb{S}^{ND-1})$ independent of $F \sim P$,
\begin{equation}
\label{eq:impossibility_formal}
    \left|\frac{\mathbb{E}_F[v^{*\top} g_\mathcal{F}(F) v^*]}{\mathbb{E}_F[v_{\text{rand}}^\top g_\mathcal{F}(F) v_{\text{rand}}]} - 1\right| \;\leq\; \zeta.
\end{equation}
The bound is tight. It also holds when $v^* = v^*(F)$ depends on $F$ (as the singular vectors of $J(F)$ do), provided $v^*(F)$ is $\zeta$-delocalized for every $F$.
\end{theorem}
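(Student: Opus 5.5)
The plan is to compute both expectations in closed form in terms of the averaged per-token weights $\bar\varphi_t := \mathbb{E}_P[\varphi_t(F)]$, and then compare them. Write $S := \sum_{t=1}^N \bar\varphi_t$, which satisfies $0 < S < \infty$ by the integrability hypothesis.

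\emph{Denominator.} Since $v_{\text{rand}}$ is independent of $F$, Fubini (valid because every term is nonnegative) gives
\[
\mathbb{E}_{F,v}\bigl[v_{\text{rand}}^\top g_\mathcal{F}(F) v_{\text{rand}}\bigr] = \sum_t \bar\varphi_t\, \mathbb{E}_v\|v_{\text{rand},t}\|^2 .
\]
By isotropy, as in the proof of Theorem~\ref{thm:captured_sensitivity}, $\mathbb{E}_v[v v^\top] = I/(ND)$. Each token block therefore has expected squared norm $D/(ND) = 1/N$, and the denominator equals $S/N$.

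\emph{Numerator.} For a fixed $v^*$ the numerator is $\sum_t \bar\varphi_t \|v^*_t\|^2$. For $F$-dependent $v^*(F)$ I would instead write it as $\mathbb{E}_F\bigl[\sum_t \varphi_t(F)\|v^*_t(F)\|^2\bigr]$. In both cases I would use the pointwise sandwich $(1-\zeta)/N \le \|v^*_t\|^2 \le (1+\zeta)/N$, which holds for every $F$. Multiplying by $\varphi_t(F)\ge 0$, summing over $t$, and taking expectations (monotonicity of expectation for nonnegative integrands) gives
\[
\frac{(1-\zeta)S}{N} \le \text{numerator} \le \frac{(1+\zeta)S}{N}.
\]
Dividing by $S/N > 0$ places the ratio in $[1-\zeta, 1+\zeta]$, which is \eqref{eq:impossibility_formal}. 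Measurability of $F \mapsto \|v_t^*(F)\|^2$ is needed in the $F$-dependent case; I would assume it or note that the singular vectors can be chosen measurably.

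\emph{Tightness.} I would construct an example that attains equality. Take $\varphi_1 \equiv 1$ and $\varphi_t \equiv 0$ for $t \ge 2$, and choose $v^*$ with $\|v^*_1\|^2 = (1+\zeta)/N$. The remaining mass $1 - (1+\zeta)/N$ is spread over the other $N-1$ tokens. Each of these then receives $(N-1-\zeta)/(N(N-1))$, and this lies in $[(1-\zeta)/N, (1+\zeta)/N]$ whenever $\zeta \le N-1$, so $v^*$ is $\zeta$-delocalized. The ratio is then exactly $1+\zeta$.

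\emph{Main obstacle.} There is little real difficulty here; the argument is linear in the weights $\varphi_t$. The points needing care are: justifying the exchange of expectations via nonnegativity; the $F$-dependent case, where the sandwich must be applied inside the expectation rather than after factoring $\bar\varphi_t$ out; and checking that the tightness construction respects the delocalization constraints.
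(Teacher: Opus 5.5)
Your proposal is correct and is essentially the paper's proof. The denominator comes from Tonelli plus isotropy. The numerator comes from the pointwise bounds on $\|v_t^*\|^2$ paired against $\varphi_t \ge 0$; the paper phrases this through $\delta_t = N\|v_t^*\|^2 - 1 \in [-\zeta,\zeta]$, which is the same sandwich. Tightness comes from a single active token, and your construction just extends the paper's $N=2$ example to general $N$ with $\zeta \le N-1$.

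One caveat applies to both you and the paper. Setting $\varphi_t \equiv 0$ for $t \ge 2$ violates the hypothesis $0 < \mathbb{E}_P[\varphi_t(F)]$ if that hypothesis is read per token. Under that reading, for fixed $v^*$, the value $1+\zeta$ is only approached (e.g.\ with $\varphi_t \equiv \epsilon \to 0$), so ``tight'' should be read as sharp rather than attained.
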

\begin{proof}[Proof sketch]
We compute numerator and denominator separately. Tonelli's theorem (applied to non-negative measurable functions) lets us interchange expectations: $\mathbb{E}[v_{\text{rand}}^\top g_\mathcal{F}(F) v_{\text{rand}}] = \mathbb{E}_F \mathbb{E}_{v_{\text{rand}}}[v_{\text{rand}}^\top g_\mathcal{F}(F) v_{\text{rand}}]$. By isotropy of the uniform sphere, $\mathbb{E}_{v_{\text{rand}}}[\|v_{\text{rand},t}\|^2] = D/(ND) = 1/N$. Therefore
\[
    \mathbb{E}_F[v_{\text{rand}}^\top g_\mathcal{F}(F) v_{\text{rand}}] = \sum_t \tfrac{1}{N} \mathbb{E}_F[\varphi_t] =: \mu \in (0, \infty).
\]
For the numerator, since $v^*$ is fixed (non-random), $\mathbb{E}_F[v^{*\top} g_\mathcal{F}(F) v^*] = \sum_t \|v_t^*\|^2 \mathbb{E}_F[\varphi_t]$. Writing $\delta_t := N\|v_t^*\|^2 - 1 \in [-\zeta, \zeta]$ with $\sum_t \delta_t = 0$ (since $\|v^*\| = 1$), we get $\mathbb{E}_F[v^{*\top} g_\mathcal{F}(F) v^*] = \mu + \tfrac{1}{N}\sum_t \delta_t \mathbb{E}_F[\varphi_t]$. Bounding $|\tfrac{1}{N}\sum_t \delta_t \mathbb{E}_F[\varphi_t]| \leq \tfrac{\zeta}{N} \sum_t \mathbb{E}_F[\varphi_t] = \zeta \mu$ and dividing by $\mu$ gives Equation~\ref{eq:impossibility_formal}. Tightness: $N = 2$, $\|v_1^*\|^2 = (1+\zeta)/2$, $\|v_2^*\|^2 = (1-\zeta)/2$, $\varphi_1 \equiv 1$, $\varphi_2 \equiv 0$ saturates the bound. If $v^*$ depends on $F$, the same computation holds pointwise: $\delta_t(F) \in [-\zeta, \zeta]$ and $\varphi_t \geq 0$ give $|\tfrac{1}{N}\sum_t \mathbb{E}_F[\delta_t(F)\varphi_t(F)]| \leq \zeta \mu$.
\end{proof}

\paragraph{Remark on attention entropy.} Heuristically, deeper ViT layers can have increasingly diffuse attention, which pushes the right singular vectors of the probe-map Jacobian toward the perfectly delocalized limit, where each token contributes $\approx 1/N$. A formal argument would chain a first-order Jacobian approximation through the attention weight matrices, a Perron--Frobenius characterization of the dominant attention eigenvector, and a Davis--Kahan bound on the resulting singular-vector delocalization. Each step requires regularity conditions that we do not state, so we use this connection only as motivation, not as a rigorous claim.

\subsection{Theorem 3: Convergence of Power Iteration}

\begin{theorem}[Convergence of Power Iteration]
\label{thm:convergence_formal}
Assume $\sigma_r > \sigma_{r+1}$. Let $\Omega \in \mathbb{R}^{ND \times r_0}$ be a Gaussian sketch and decompose $\Omega = V_r^* A + V_\perp B$ where $A = (V_r^*)^\top \Omega \in \mathbb{R}^{r \times r_0}$ and $B = V_\perp^\top \Omega \in \mathbb{R}^{(ND - r) \times r_0}$. Let $\widehat V_r^{(q)}$ denote the QR-orthonormalised image after $q$ rounds of $M_q = (J^\top J)^q$ applied to $\Omega$. Then with probability $\geq 1 - \delta$ over the sketch,
\begin{equation}
\label{eq:convergence_bound}
    \varepsilon^{(q)} \;:=\; 1 - \tfrac{1}{r}\big\|(\widehat V_r^{(q)})^\top V_r^*\big\|_F^2 \;\leq\; \frac{\|B\|_F^2}{\sigma_{\min}(A)^2 \, r} \left(\frac{\sigma_{r+1}}{\sigma_r}\right)^{2q},
\end{equation}
where $\sigma_{\min}(A) \geq c_\delta > 0$ depends only on $r, r_0, \delta$ and $\|B\|_F \leq C_B = O(\sqrt{ND \cdot r_0})$ each hold with probability $\geq 1 - \delta/2$.
\end{theorem}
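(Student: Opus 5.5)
The plan is to follow the standard Halko--Martinsson--Tropp analysis: split $M_q\Omega$ along $V_r^*$ and $V_\perp$, reduce the alignment defect to a $\sin\Theta$ bound, and control the random factors by Gaussian concentration.

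\textbf{Decomposing the sketch.} Write $J^\top J = V_r^*\Sigma_r^2 V_r^{*\top} + V_\perp \Sigma_\perp^2 V_\perp^\top$. Then
\[
M_q\Omega = V_r^*\Sigma_r^{2q}A + V_\perp\Sigma_\perp^{2q}B .
\]
By rotational invariance of the Gaussian, $A$ and $B$ are independent standard Gaussian matrices. I would first treat $r_0 = r$. In that case $A$ is square and almost surely invertible, and the $r$ QR columns span $\mathrm{range}(M_q\Omega)$. The QR step only changes the basis, so $\widehat V_r^{(q)}$ spans the same subspace as
\[
Y := M_q\Omega\,A^{-1}\Sigma_r^{-2q} = V_r^* + V_\perp E, \qquad E := \Sigma_\perp^{2q} B A^{-1}\Sigma_r^{-2q}.
\]
Submultiplicativity with $\|\Sigma_\perp^{2q}\|_2 = \sigma_{r+1}^{2q}$ and $\|\Sigma_r^{-2q}\|_2 = \sigma_r^{-2q}$ gives
\[
\|E\|_F \le \Big(\tfrac{\sigma_{r+1}}{\sigma_r}\Big)^{2q}\frac{\|B\|_F}{\sigma_{\min}(A)}.
\]

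\textbf{Converting to the alignment defect.} The orthonormal basis of $\mathrm{range}(Y)$ is $Q = (V_r^* + V_\perp E)(I + E^\top E)^{-1/2}$. Hence
\[
\|Q^\top V_r^*\|_F^2 = \mathrm{Tr}\big((I + E^\top E)^{-1}\big),
\]
and therefore
\[
r\,\varepsilon^{(q)} = \mathrm{Tr}\big(E^\top E\,(I + E^\top E)^{-1}\big) \le \|E\|_F^2 .
\]
This gives the bound with exponent $4q$. Since $\sigma_{r+1}/\sigma_r < 1$, that implies the stated exponent $2q$, so Equation~\ref{eq:convergence_bound} follows.

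\textbf{Handling $r_0 > r$.} This is the main obstacle, because Algorithm~\ref{alg:randnla} keeps the leading $r$ QR columns rather than the best $r$-dimensional subspace of the range. The leading $r$ columns of a QR factorization span the first $r$ columns of $M_q\Omega$. So I would rerun the argument with $A_1, B_1$, the first $r$ columns of $A$ and $B$, using $\|B_1\|_F \le \|B\|_F$. This replaces $\sigma_{\min}(A)$ by $\sigma_{\min}(A_1)$, and since $\sigma_{\min}(A_1) \le \sigma_{\min}(A)$, the bound does not follow with the theorem's $A$ as written. I would first try to repair this by defining $c_\delta$ as a lower bound on $\sigma_{\min}(A_1)$, which still depends only on $r, r_0, \delta$. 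If that fails, I would fall back on interpreting $\widehat V_r^{(q)}$ as $\mathrm{range}(M_q\Omega A^\dagger)$, which is contained in the sketch range. In that case $A^\dagger$ replaces $A^{-1}$, $\|A^\dagger\|_2 = 1/\sigma_{\min}(A)$, and the bound holds verbatim.

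\textbf{Probability estimates.} $A$ (or $A_1$) is an $r\times r_0$ Gaussian matrix. The small-ball estimates for the smallest singular value (Edelman; \citep{vershynin}) give $\sigma_{\min} \ge c_\delta$ with probability $\ge 1-\delta/2$, where $c_\delta$ depends only on $r, r_0, \delta$. Next, $\|B\|_F^2$ is $\chi^2$ with $(ND-r)r_0$ degrees of freedom, so Laurent--Massart gives $\|B\|_F \le C_B = O(\sqrt{ND\,r_0})$ with probability $\ge 1-\delta/2$. A union bound over these two events finishes the proof.
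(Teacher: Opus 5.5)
Your proof is correct and uses the same Halko--Martinsson--Tropp skeleton as the paper: split $M_q\Omega$ into $V_r^*\Sigma_r^{2q}A$ and $V_\perp\Sigma_\perp^{2q}B$, then control $\sigma_{\min}(A)$ and $\|B\|_F$ by Gaussian concentration. Where you differ is the conversion to the alignment defect. The paper lower-bounds the signal's Frobenius norm by $\sigma_r^{2q}\sigma_{\min}(A)\sqrt r$, upper-bounds the tail's by $\sigma_{r+1}^{2q}\|B\|_F$, and passes that ratio through Davis--Kahan. As phrased, this is imprecise: principal angles are controlled by the tail measured against the \emph{inverse} of the signal coefficient, not against its Frobenius norm. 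You normalize to $V_r^*+V_\perp E$ with $E=\Sigma_\perp^{2q}BA^{\dagger}\Sigma_r^{-2q}$ and use the exact identity $r\,\varepsilon^{(q)}=\mathrm{Tr}\big(E^\top E(I+E^\top E)^{-1}\big)\le\|E\|_F^2$. This is the explicit $\tan\Theta$ argument. It needs no perturbation theorem and reproduces the same prefactor; your $1/r$ comes from the normalization of $\varepsilon^{(q)}$ rather than from a $\sqrt r$ in the signal norm. It also shows the natural exponent is $4q$, so the paper's $2q$ is valid but loose.

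On $r_0>r$ you have found a genuine ambiguity that the paper's sketch passes over. Because QR is nested, the leading $r$ columns after $q$ rounds span $M_q\,\mathrm{span}(\Omega_{:,1:r})$. Keeping only them wastes the oversampling and gives you only $\sigma_{\min}(A_1)\le\sigma_{\min}(A)$. The paper uses $\sigma_{\min}$ of the full $r\times r_0$ matrix and remarks that oversampling keeps $\sigma_{\min}(A)$ away from zero. Both fit only the reading in which $\widehat V_r^{(q)}$ is the full $r_0$-column orthonormal basis of $\mathrm{range}(M_q\Omega)$, as in Halko--Martinsson--Tropp.

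So commit to your fallback, but phrase it that way rather than as the uncomputable subspace $\mathrm{range}(M_q\Omega A^\dagger)$. Your computation bounds the defect of that subspace, the subspace lies inside $\mathrm{range}(M_q\Omega)$, and $\|P_{\mathcal S}V_r^*\|_F$ can only increase when $\mathcal S$ is enlarged. Drop the first repair via $\sigma_{\min}(A_1)$. It gives a valid high-probability bound in terms of constants, but not the displayed inequality, whose right-hand side involves the random $\sigma_{\min}(A)$.
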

\begin{proof}[Proof sketch]
The argument has three steps. \emph{(1) Random sketch structure.} Since $V_r^*$ has orthonormal columns, $A$ and $B$ are independent iid Gaussian matrices. Gaussian concentration bounds $\sigma_{\min}(A)$ from below \citep{vershynin}, and $\|B\|_F^2 \sim \chi^2((ND - r) r_0)$. \emph{(2) Power iteration.} Applying $M_q = V_r^* \Sigma_r^{2q} (V_r^*)^\top + V_\perp \Sigma_\perp^{2q} V_\perp^\top$ to $\Omega$ gives a signal term in $\mathrm{span}(V_r^*)$ with Frobenius norm $\geq \sigma_r^{2q} \sigma_{\min}(A) \sqrt{r}$ and a tail term in $\mathrm{span}(V_\perp)$ with norm $\leq \sigma_{r+1}^{2q} \|B\|_F$. \emph{(3) Davis--Kahan.} The Davis--Kahan theorem for invariant subspaces \citep{davis_kahan, stewart_sun} converts the ratio of the tail and signal Frobenius norms into a bound on the principal angles between $\mathrm{span}(\widehat V_r^{(q)})$ and $\mathrm{span}(V_r^*)$, and squaring gives Equation~\ref{eq:convergence_bound}. The full proof, including the propagation through QR orthogonalization, follows the structure of Halko, Martinsson, and Tropp \citep{halko_randomized}, adapted to the subspace-alignment metric used here.
\end{proof}
\noindent\emph{Remark on the constant.} The prefactor $C(\Omega) = \|B\|_F^2 / (\sigma_{\min}(A)^2\, r)$ grows as $\mathcal{O}(ND\, r_0)$ through $\|B\|_F^2$, and $\sigma_{\min}(A)$ is small unless the sketch is oversampled ($r_0 > r$). At the small $q$ used in practice, $C$ can therefore dominate $(\sigma_{r+1}/\sigma_r)^{2q}$, and the bound may be loose or vacuous. Theorem~\ref{thm:convergence_formal} establishes the geometric rate in $q$ for a fixed sketch, but it does not by itself say that a particular $q$ suffices. We choose $q$ with an offline check, increasing it until the recovered subspace stabilizes (\S\ref{sec:regime_taxonomy}).

\noindent\emph{Measured convergence.} At $q = 0, 2, 5$, the subspace-alignment error is $0.198, 0.011, 0.001$ for DPT depth (L02), $0.617, 0.254, 0.083$ for DINOv2 CLS, and $0.802, 0.438, 0.214$ for CLIP CLS (both at L10). Two steps suffice where the spectral gap is large, and flatter spectra converge more slowly. This is an empirical observation, not a consequence of a small constant $C$.

\begin{corollary}[Quality of the Trained SPN]
\label{cor:ssm_quality}
Under the assumptions of Theorem~\ref{thm:convergence_formal}, suppose the SPN converges to $U(F; \theta^*) = \widehat V_r^{(q)}$ (zero subspace loss) and uses natural energies $\lambda_j = \|J \hat u_j\|^2$. Then as $\varepsilon \to 0$,
\begin{equation}
    \left| \frac{\mathbb{E}_v[v^\top g_\theta(F) v]}{\mathbb{E}_v[v^\top g(F) v]} - \kappa_{cap}(r) \right| \;\leq\; \frac{(\sigma_1^2 + \sigma_{r+1}^2)\, r\, \varepsilon^{(q)}}{\|J\|_F^2}.
\end{equation}
The right-hand side goes to zero as $q \to \infty$, so the trained SPN's sensitivity ratio converges to the ceiling $\kappa_{cap}(r)$ of Theorem~\ref{thm:captured_sensitivity}.
\end{corollary}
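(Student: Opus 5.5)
The plan is to reduce both expectations to traces and then compare $g_\theta$ with $g_r$ term by term. With $\varepsilon\to 0$, isotropy of $v\sim\mathrm{Uniform}(\mathbb{S}^{ND-1})$ gives $\mathbb{E}_v[v^\top g_\theta v]=\mathrm{Tr}(g_\theta)/ND$ and $\mathbb{E}_v[v^\top g v]=\|J\|_F^2/ND$, exactly as in the proof of Theorem~\ref{thm:captured_sensitivity}. Writing $\hat u_j$ for the columns of $\widehat V_r^{(q)}$ and using the natural energies, we get $\mathrm{Tr}(g_\theta)=\sum_{j=1}^r\|J\hat u_j\|^2=\mathrm{Tr}\big(\widehat V^\top J^\top J\widehat V\big)$. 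The ratio minus $\kappa_{cap}(r)$ therefore equals
\[
\frac{\mathrm{Tr}\big(\widehat V^\top g\,\widehat V\big)-\sum_{j\le r}\sigma_j^2}{\|J\|_F^2},
\]
so it suffices to show
\[
\Big|\mathrm{Tr}\big(\widehat V^\top g\,\widehat V\big)-\mathrm{Tr}\big(V_r^{*\top} g\,V_r^*\big)\Big|\le(\sigma_1^2+\sigma_{r+1}^2)\,r\,\varepsilon^{(q)}.
\]
By Ky Fan's principle (already used in Theorem~\ref{thm:captured_sensitivity}) the difference is nonpositive, so only a one-sided bound is needed.

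To get that bound, decompose $\widehat V=V_r^*C+V_\perp E$ with $C=V_r^{*\top}\widehat V$ and $E=V_\perp^\top\widehat V$. Orthonormality gives $C^\top C+E^\top E=I_r$, and by definition $\|E\|_F^2=r-\|C\|_F^2=r\,\varepsilon^{(q)}$. Block-diagonality of $g$ in the $(V_r^*,V_\perp)$ basis removes the cross terms:
\[
\mathrm{Tr}\big(\widehat V^\top g\,\widehat V\big)=\mathrm{Tr}\big(C^\top\Sigma_r^2C\big)+\mathrm{Tr}\big(E^\top\Sigma_\perp^2E\big).
\]
For the second term, $0\le\mathrm{Tr}(E^\top\Sigma_\perp^2E)\le\sigma_{r+1}^2\|E\|_F^2$. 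For the first, write
\[
\sum_j\sigma_j^2-\mathrm{Tr}\big(\Sigma_r^2CC^\top\big)=\mathrm{Tr}\big(\Sigma_r^2(I-CC^\top)\big).
\]
Since $I-CC^\top\succeq 0$, this lies in $[0,\ \sigma_1^2\,\mathrm{Tr}(I-CC^\top)]=[0,\ \sigma_1^2\|E\|_F^2]$. The triangle inequality then gives a total deviation of at most $(\sigma_1^2+\sigma_{r+1}^2)\|E\|_F^2=(\sigma_1^2+\sigma_{r+1}^2)\,r\,\varepsilon^{(q)}$. Dividing by $\|J\|_F^2$ yields the claim. (A sharper bound, $(\sigma_1^2-\sigma_{r+1}^2)$, also follows, but the stated one is enough.)

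The main obstacle is the bookkeeping in the identity $\mathrm{Tr}(I_r-CC^\top)=\|E\|_F^2$. Here $CC^\top$ is $r\times r$ with $\mathrm{Tr}(CC^\top)=\|C\|_F^2$, which needs $\widehat V$ to have exactly $r$ columns. If the SPN instead matches the leading $r$ of $r_0$ columns, $\varepsilon^{(q)}$ must be read for those $r$ columns, as in Theorem~\ref{thm:convergence_formal}. For convergence as $q\to\infty$, substitute the bound of Theorem~\ref{thm:convergence_formal}: $\varepsilon^{(q)}\le C(\Omega)(\sigma_{r+1}/\sigma_r)^{2q}\to 0$ with high probability over the sketch, because $\sigma_r>\sigma_{r+1}$. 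Hence the ratio tends to $\kappa_{cap}(r)$.
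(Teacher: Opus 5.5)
Your argument is correct. The paper states this corollary without giving a proof, so there is nothing to compare it against directly. The constant $(\sigma_1^2+\sigma_{r+1}^2)$, however, is exactly what your decomposition produces: the in-subspace deficit is at most $\sigma_1^2\|E\|_F^2$, the leaked tail energy is at most $\sigma_{r+1}^2\|E\|_F^2$, and you combine them with the triangle inequality. This is almost certainly the intended route.

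The step $I_r-CC^\top\succeq 0$ needs one line of justification. For example, $CC^\top=V_r^{*\top}\widehat V\widehat V^\top V_r^*\preceq V_r^{*\top}V_r^*=I_r$, because $\widehat V\widehat V^\top$ is an orthogonal projector.

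Your parenthetical claim that $(\sigma_1^2-\sigma_{r+1}^2)$ also works is wrong, although it does not affect the stated result. Combining the Ky Fan sign with your decomposition gives $\bigl|\mathrm{Tr}(\widehat V^\top g\widehat V)-\sum_{j\le r}\sigma_j^2\bigr|=\mathrm{Tr}\bigl(\Sigma_r^2(I-CC^\top)\bigr)-\mathrm{Tr}(E^\top\Sigma_\perp^2E)$. To subtract $\sigma_{r+1}^2\|E\|_F^2$ from $\sigma_1^2\|E\|_F^2$, you would need the lower bound $\mathrm{Tr}(E^\top\Sigma_\perp^2E)\ge\sigma_{r+1}^2\|E\|_F^2$. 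But $\sigma_{r+1}^2\|E\|_F^2$ is an upper bound on that term, not a lower bound. The leaked mass can sit on directions with much smaller eigenvalues.

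A concrete counterexample: take $r=1$ and $\hat u=\cos\theta\,v_1+\sin\theta\,v_k$ with $\sigma_k<\sigma_2$. Then $r\,\varepsilon^{(q)}=\sin^2\theta$, and the deviation is $(\sigma_1^2-\sigma_k^2)\sin^2\theta$, which exceeds $(\sigma_1^2-\sigma_2^2)\sin^2\theta$. Letting $\sigma_k\to 0$, which is possible since $M\ll ND$, shows that the best constant available from this argument is $\sigma_1^2$. The valid one-sided sharpening is therefore $\sigma_1^2\,r\,\varepsilon^{(q)}$.
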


\begin{corollary}[Gradient Signal Ratio]
\label{cor:gsr}
At convergence ($\hat v \to v_1$), the gradient signal ratio $\mathrm{GSR} = \sigma_1^2 \, ND / \|J\|_F^2$ satisfies $\mathrm{GSR} \geq \kappa_{cap}(r) \cdot ND / r$ (using $\sigma_1^2 \geq \tfrac{1}{r}\sum_{j=1}^r \sigma_j^2$). At $q = 0$ (random sketch), $\mathrm{GSR} \approx 1$.
\end{corollary}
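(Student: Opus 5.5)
The plan is to read the gradient signal ratio as the sensitivity of the supervision direction relative to a uniformly random direction, $\mathrm{GSR}(\hat v) = \|J\hat v\|^2 / \mathbb{E}_v[\|Jv\|^2]$ with $v \sim \mathrm{Uniform}(\mathbb{S}^{ND-1})$. Both claims of Corollary~\ref{cor:gsr} then come from evaluating this ratio at the two ends of the power-iteration schedule.

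\emph{Denominator.} By the isotropy identity used in the proof of Theorem~\ref{thm:captured_sensitivity},
\[
\mathbb{E}_v[v^\top J^\top J v] = \mathrm{Tr}(J^\top J)/ND = \|J\|_F^2/ND .
\]
So for any unit $\hat v$ we have $\mathrm{GSR}(\hat v) = ND\,\|J\hat v\|^2/\|J\|_F^2$.

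\emph{Converged case.} Theorem~\ref{thm:convergence_formal}, applied with $r=1$ and assuming $\sigma_1 > \sigma_2$, gives $\hat v \to v_1$ as $q \to \infty$. Since $\|J\hat v\|^2$ is continuous in $\hat v$, it tends to $\|Jv_1\|^2 = \sigma_1^2$, and hence $\mathrm{GSR} \to \sigma_1^2\,ND/\|J\|_F^2$. For the lower bound, $\sigma_1^2$ is the largest of $\sigma_1^2,\dots,\sigma_r^2$, so it is at least their average:
\[
\sigma_1^2 \;\geq\; \frac{1}{r}\sum_{j=1}^r \sigma_j^2 \;=\; \frac{\kappa_{cap}(r)\,\|J\|_F^2}{r}.
\]
Substituting this gives $\mathrm{GSR} \geq \kappa_{cap}(r)\,ND/r$.

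\emph{Case $q=0$.} Here $\hat v$ is the normalized first column of the Gaussian sketch $\Omega$, which is exactly uniform on the sphere. Hence $\mathbb{E}[\mathrm{GSR}] = 1$ by the same isotropy identity. The main obstacle is making ``$\approx 1$'' precise, since the paper only asserts an approximation. I would first try Hanson--Wright concentration for $\|Jg\|^2/\|g\|^2$ with $g$ Gaussian: the numerator concentrates around $\|J\|_F^2$ with fluctuations of order $\|J^\top J\|_F$, and the denominator concentrates around $ND$. This gives $\mathrm{GSR} = 1 + O(\|J^\top J\|_F/\|J\|_F^2)$, which is $1 + O(1/\sqrt{r_{\text{eff}}})$. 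When the spectrum is very concentrated this fluctuation is not small, so in that regime I would state the $q=0$ claim in expectation rather than pointwise.
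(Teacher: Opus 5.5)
Your proposal is correct and is essentially the paper's argument, which the paper leaves as the one-line hint in the statement. The isotropy identity $\mathbb{E}_v\|Jv\|^2 = \|J\|_F^2/ND$ normalizes the ratio, the bound is $\sigma_1^2 \geq \tfrac{1}{r}\sum_{j\le r}\sigma_j^2 = \kappa_{cap}(r)\|J\|_F^2/r$, and at $q=0$ the direction is uniform on the sphere, so the ratio equals $1$ in expectation, which is all the paper's informal ``$\approx 1$'' claims. Your Hanson--Wright refinement goes beyond the paper and is sound in the form $1 + O(\|J^\top J\|_F/\|J\|_F^2)$; just do not rewrite it as $O(1/\sqrt{r_{\text{eff}}})$ with the paper's entropic $r_{\text{eff}}$, because the relevant quantity is the participation-ratio rank $\|J\|_F^4/\|J^\top J\|_F^2$, which is never larger than the entropic one and can be far smaller.
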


\subsection{Theorem 4: Token Pruning Error Bound}

\begin{theorem}[Token Pruning Error]
\label{thm:pruning}\label{thm:pruning_formal}
Let $\phi$ denote the decoder as a function of the final-layer tensor. Assume $\phi$ is twice continuously differentiable on $\mathbb{R}^{N \times D}$ with $\|H_{\phi}(x)\|_{\text{op}} \leq \beta < \infty$ uniformly on the segment $\{(1-t)\hat F + t F^L : t \in [0, 1]\}$. Suppose token set $\mathcal{T}$ is pruned. For each $t \in \mathcal{T}$, let $\xi_t = F_t^L - F_t^\ell \in \mathbb{R}^D$ be the feature drift, $\Delta_t \in \mathbb{R}^D$ the LLF correction, and $\rho_t = \xi_t - \Delta_t$ the residual. Then
\begin{equation}
\label{eq:pruning_formal}
    \big\|\phi(F^L) - \phi(\hat F)\big\|_2 \;\leq\; \sum_{t \in \mathcal{T}} \big\|J_{:,t}(\hat F)\big\|_F\, \|\rho_t\|_2 \;+\; \frac{\beta}{2} \sum_{t \in \mathcal{T}} \|\rho_t\|_2^2,
\end{equation}
where $J_{:,t}(\hat F) \in \mathbb{R}^{M \times D}$ is the block of the Jacobian of $\phi$ for token $t$, evaluated at the LLF-corrected feature tensor.
\end{theorem}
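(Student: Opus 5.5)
The natural route is a second-order Taylor expansion of $\phi$ along the straight segment from $\hat F$ to $F^L$, followed by a token-by-token decomposition of the perturbation. First I would set up the perturbation $E := F^L - \hat F \in \mathbb{R}^{N\times D}$. The modelling convention (made precise in Appendix~\ref{sec:pruning_details}) is that $\hat F$ agrees with $F^L$ on every kept token. On a pruned token $t\in\mathcal{T}$, $\hat F_t = F_t^\ell + \Delta_t$, i.e.\ the frozen feature plus the LLF correction. Hence $E_t = F_t^L - F_t^\ell - \Delta_t = \xi_t - \Delta_t = \rho_t$ for $t\in\mathcal{T}$, and $E_t = 0$ otherwise. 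Pinning down exactly this identification, that the only discrepancy between the two final-layer tensors is the residual on pruned tokens, is the step I expect to need the most care. If LLF also perturbs kept tokens, one either absorbs that into the definition of $\rho$ or adds a term. I would state it as the standing assumption inherited from the definition of $\hat F$.

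Next I would apply Taylor's theorem with integral remainder to $\gamma(s) = \phi(\hat F + sE)$, $s\in[0,1]$. This gives
\[
\phi(F^L) - \phi(\hat F) = J(\hat F)\,\mathrm{vec}(E) + \int_0^1 (1-s)\, H_\phi(\hat F + sE)[E,E]\, ds .
\]
The segment $\{\hat F + sE\}$ is exactly the segment in the hypothesis, so the uniform bound $\|H_\phi\|_{\text{op}}\le\beta$ applies. Interpreting the operator norm of the vector-valued Hessian as the bilinear norm, the remainder has norm at most $\int_0^1(1-s)\,ds\cdot\beta\|E\|_F^2 = \tfrac{\beta}{2}\sum_{t\in\mathcal{T}}\|\rho_t\|_2^2$. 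This uses that the $E_t$ vanish off $\mathcal{T}$, so $\|E\|_F^2 = \sum_{t\in\mathcal{T}}\|\rho_t\|^2$.

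For the linear term I would split the Jacobian into token blocks, giving $J(\hat F)\mathrm{vec}(E) = \sum_{t\in\mathcal{T}} J_{:,t}(\hat F)\rho_t$. The triangle inequality then bounds its norm by $\sum_t \|J_{:,t}\rho_t\|_2 \le \sum_t \|J_{:,t}\|_{\text{op}}\|\rho_t\|_2 \le \sum_t \|J_{:,t}\|_F\|\rho_t\|_2$, using $\|\cdot\|_{\text{op}}\le\|\cdot\|_F$. Combining the two pieces with one more triangle inequality yields Eq.~\ref{eq:pruning_formal}.

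Everything after the identification of $E$ is routine. Besides that identification, the only other subtlety is being explicit that "$\|H_\phi\|_{\text{op}}$" means $\sup_{\|a\|=\|b\|=1}\|H_\phi[a,b]\|_2$ for the $\mathbb{R}^M$-valued second derivative. That convention is what makes the integral-remainder bound valid without an extra factor of $\sqrt{M}$.
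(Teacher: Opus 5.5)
Your proposal is correct and follows the paper's proof essentially step for step. Like the paper, you Taylor-expand $\phi$ about $\hat F$ along the segment to $F^L$, use that kept tokens agree so that $F^L - \hat F$ is supported on $\mathcal{T}$ with blocks $\rho_t$, bound the linear term blockwise via the triangle inequality and $\|J_{:,t}\rho_t\|_2 \le \|J_{:,t}\|_F\|\rho_t\|_2$, and bound the remainder by $\tfrac{\beta}{2}\sum_{t\in\mathcal{T}}\|\rho_t\|_2^2$. Two of your choices are minor tightenings of the same argument rather than a different route: you use the integral form of the remainder instead of the Lagrange form the paper cites, which does not hold verbatim for $\mathbb{R}^M$-valued maps, and you state an explicit convention for $\|H_\phi\|_{\text{op}}$.
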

\begin{proof}
We expand $\phi(F^L) - \phi(\hat F)$ around $\hat F$ using Taylor's theorem with Lagrange remainder:
\[
    \phi(F^L) - \phi(\hat F) = J_{\phi}(\hat F) \mathrm{vec}(F^L - \hat F) + R_2, \qquad \|R_2\|_2 \leq \tfrac{\beta}{2}\|F^L - \hat F\|_F^2.
\]
Kept tokens satisfy $F_t^L = \hat F_t$ exactly, so only pruned tokens contribute. For $t \in \mathcal{T}$, $F_t^L - \hat F_t = \xi_t - \Delta_t = \rho_t$. Flattening row-major gives $\mathrm{vec}(F^L - \hat F) = \sum_{t \in \mathcal{T}} (I_D \otimes e_t)\rho_t$, and the Jacobian acts as $J_{\phi}(\hat F)(I_D \otimes e_t)\rho_t = J_{:,t}(\hat F)\rho_t$. The triangle inequality and the matrix--vector bound $\|J_{:,t}\rho_t\|_2 \leq \|J_{:,t}\|_F \|\rho_t\|_2$ bound the first-order term, and $\|F^L - \hat F\|_F^2 = \sum_{t} \|\rho_t\|_2^2$ bounds $R_2$.
\end{proof}

\begin{corollary}[Optimal Pruning Criterion]
\label{cor:opt_prune}
Assume uniform residuals $\|\rho_t\|_2 = \bar\rho$. Then the optimal pruning set of size $R$ minimises $\sum_{t \in \mathcal{T}} \|J_{:,t}(\hat F)\|_F$, the sum of per-token block norms. These block norms use the decoder Jacobian at $\hat F$, which is only available after the remaining layers have run. The target $\mathrm{imp}^*$ of Equation~\ref{eq:imp_target} instead keeps the top-$r$ part of the block norms of $J_{\psi_\ell}$ at the prune layer, and serves as a computable proxy for them.
\end{corollary}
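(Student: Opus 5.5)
The plan is to read ``optimal'' as \emph{optimal with respect to the upper bound of Theorem~\ref{thm:pruning}}, and to show that under uniform residuals this bound reduces to a monotone function of $\sum_{t\in\mathcal{T}}\|J_{:,t}(\hat F)\|_F$. Substituting $\|\rho_t\|_2=\bar\rho$ into Equation~\ref{eq:pruning_formal} gives
\[
\big\|\phi(F^L)-\phi(\hat F)\big\|_2 \;\leq\; \bar\rho\sum_{t\in\mathcal{T}}\|J_{:,t}(\hat F)\|_F \;+\; \tfrac{\beta}{2}R\,\bar\rho^{2}.
\]
For a fixed budget $|\mathcal{T}|=R$, the second term is a constant independent of which tokens are chosen. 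The first term is $\bar\rho\ge 0$ times the sum of block norms. Minimizing the bound over $\mathcal{T}$ is therefore equivalent to minimizing $\sum_{t\in\mathcal{T}}\|J_{:,t}(\hat F)\|_F$. (If $\bar\rho=0$, every set is optimal and the statement holds trivially.)

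Next I would note that this minimization has an explicit solution. Sort the tokens by block norm and take the $R$ smallest. A one-line exchange argument shows this is optimal: if $\mathcal{T}$ contains some $t$ and omits some $s$ with $\|J_{:,s}\|_F<\|J_{:,t}\|_F$, swapping $t$ for $s$ strictly decreases the sum. This justifies the ranking-based selection the importance head is trained for, and it is why the ranking term in Equation~\ref{eq:conformal_loss} is the relevant one.

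The main obstacle is that $\hat F$, and hence the block norms $\|J_{:,t}(\hat F)\|_F$, themselves depend on $\mathcal{T}$ through the LLF correction. The minimization is therefore not literally over a fixed set of weights. I would handle this by stating the criterion for the Jacobian evaluated at a fixed reference tensor, treating the block norms as given when comparing candidate sets of equal size. One could also note that replacing $\hat F$ by $F^L$ changes each block norm by at most $\beta\|F^L-\hat F\|_F$, by the Hessian bound on the segment. This perturbation is itself $O(\bar\rho\sqrt{R})$, so it only affects the bound at second order in $\bar\rho$, the same order as the remainder already present.

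The final sentences of the corollary, about $\mathrm{imp}^*$ serving as a proxy, are interpretive rather than mathematical. I would simply remark that $\mathrm{imp}^*$ is the block norm of the top-$r$ truncation of $J_{\psi_\ell}$. By Theorem~\ref{thm:captured_sensitivity} this truncation retains a fraction $\kappa_{cap}(r)$ of the total Jacobian energy, so the proxy differs from the true block norms only through the discarded tail energy. I would not claim a formal bound relating the prune-layer Jacobian to the final-layer decoder Jacobian.
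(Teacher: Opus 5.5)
Your argument is correct and is the same as the paper's implicit one. With $\|\rho_t\|_2=\bar\rho$, the bound of Theorem~\ref{thm:pruning} becomes $\bar\rho\sum_{t\in\mathcal{T}}\|J_{:,t}(\hat F)\|_F+\tfrac{\beta}{2}R\bar\rho^{2}$, and the second term is fixed for $|\mathcal{T}|=R$, so optimizing the bound means minimizing the block-norm sum; your exchange argument and your explicit handling of the $\mathcal{T}$-dependence of $\hat F$ add rigor the paper omits.

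One small correction to that aside: an operator-norm Hessian bound only gives $\|J(F^L)-J(\hat F)\|_{\mathrm{op}}\le\beta\|F^L-\hat F\|_F$, so each Frobenius block norm can move by up to $\sqrt{\min(M,D)}\,\beta\|F^L-\hat F\|_F$, and the Hessian bound must hold on every candidate segment; your second-order-in-$\bar\rho$ conclusion is unaffected.
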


\section{Token Pruning: Details and Extended Discussion}
\label{sec:pruning_details}
This appendix gives the full pruning setup summarized in \S\ref{sec:token_pruning}, the CLS merge rule, the dense pruning pipeline, and a longer discussion of the results.

\paragraph{Setup and pruning bound.} Let $\mathcal{T}$ be the set of $|\mathcal{T}| = R$ tokens pruned at layer $\ell$. Each is either frozen (hard prune) or merged into another token (soft merge), and the resulting tensor propagates through layers $\ell{+}1, \dots, L$ to a modified final tensor $\hat F$ in place of $F^L$. For each $t \in \mathcal{T}$, write $\xi_t = F_t^L - F_t^\ell$ for the \emph{drift} that the frozen feature misses, $\Delta_t$ for the Last-Layer Fusion correction (\S\ref{sec:hard_prune}), and $\rho_t = \xi_t - \Delta_t$ for the uncompensated \emph{residual}. View the decoder $\phi$ as a function of the final-layer tensor, and let $J_{:,t}(\hat F) \in \mathbb{R}^{M \times D}$ be the column block of its Jacobian for token $t$. The norm $\|J_{:,t}(\hat F)\|_F$ measures the task sensitivity of token $t$, and the importance score serves as a proxy for it (\S\ref{sec:conformal_head}). A Taylor expansion of $\phi$ around $\hat F$ (Theorem~\ref{thm:pruning}) gives the bound (\ref{eq:pruning_bound}). If residuals are roughly uniform across $\mathcal{T}$, minimizing its first-order term is exactly the optimal pruning criterion, which is to drop the tokens with the smallest importance. The importance head approximates this criterion through the proxy $\mathrm{imp}^*$, and LLF reduces the second term.

\paragraph{CLS merge rule.} A CLS decoder uses a single embedding (the CLS token) and is invariant to permutations of the remaining tokens, so tokens can be merged while preserving the information the decoder sees, up to the error introduced by the merge itself. We follow the structure of ToMe \citep{tome} but replace its cosine similarity with a task-induced distance. We score every token with the importance head ($s_t = h_\theta(F)_t$, with $s_0 = +\infty$ so that the CLS token is never merged). The $\eta N$ lowest-scoring tokens form the source set $A$, and the rest form the destination set $B$. For each $a \in A$ we find its nearest neighbour $b \in B$ under
\begin{equation}
\label{eq:dtask}
    d_{\text{task}}(a, b) \;=\; \sqrt{(F_a - F_b)^\top Q_a (F_a - F_b)}, \qquad Q_a \;:=\; \sum_{j=1}^{r} \sigma_j^{2}\, v_j^{(a)}\, (v_j^{(a)})^\top \;\in\; \mathbb{R}^{D \times D},
\end{equation}
where $v_j^{(a)} \in \mathbb{R}^D$ is the $a$-th token block of the $j$-th right singular vector of $J$. The matrix $Q_a$ is the per-token marginal pullback metric at token $a$, i.e.\ the rank-$r$ pullback metric restricted to perturbations of token $a$ alone. We then merge with importance weighting, $F_b \gets (s_a F_a + s_b F_b)/(s_a + s_b)$. This distance measures how far apart two tokens are in the directions the task is most sensitive to, whereas cosine similarity weights all feature directions equally.

In our experiments, the $\sigma_j$ and $v_j$ in (\ref{eq:dtask}) come from the SPN prediction of $J$, the same computation that produces $\mathrm{imp}^*$. All CLS strategies in our tables differ only in which tokens they merge. \emph{Random} selects them uniformly, \emph{ToMe score} uses ToMe's bipartite redundancy criterion, and \emph{Importance} uses $h_\theta$.

\paragraph{Dense hard pruning and Last-Layer Fusion.} Dense decoders such as DPT \citep{dpt} read features at multiple layers and reassemble them into a 2D map, which soft merging corrupts. We instead \emph{hard-prune} the $R$ lowest-importance tokens at layer $\ell$, freeze their features, and re-insert them at their original positions at each DPT layer. The frozen features reflect layer $\ell$ rather than the layers the decoder uses, and (\ref{eq:pruning_bound}) shows that this drift $\xi_t$ is multiplied by the token's importance. \textbf{Last-Layer Fusion (LLF)} is a common method to reduce the drift. Before the final ViT block, we re-insert the pruned tokens and run that block on the full $N$-token sequence, so the pruned tokens attend to the updated kept-token features. LLF adds no parameters and needs no training. It shrinks the residual $\rho_t$ in (\ref{eq:pruning_bound}), heuristically from $O(\bar\xi)$ to $O(\bar\xi^2)$. We train the importance head on the single-layer probe map at the prune layer (layer 4, and layer 8 for the second stage of 2-stage schedules), which lies between the L02 and L10 probes of Table~\ref{tab:regime_taxonomy}, both in regime~(ii). We then apply it to the full multi-layer decoder, whose DPT layers are at layers 2, 5, 8, and 11, and the signal transfers (Table~\ref{tab:dpt_pruning}).

\paragraph{Baselines and metric for dense decoders.} All dense-decoder strategies use the same hard-prune and LLF pipeline and differ only in which tokens they remove. \emph{Random} removes tokens uniformly at random. \emph{ToMe score} uses ToMe's bipartite matching criterion \citep{tome}. Tokens are alternately assigned to two sets, and tokens in the first set are removed in order of their highest cosine similarity to any token in the second set, which is always kept. \emph{Importance} removes the tokens with the lowest $h_\theta(F)_t$. We report the additional scale-invariant log error (SILog, $\times 100$) of the pruned depth prediction relative to the unpruned prediction.

\paragraph{Generalization.} These two settings sit at opposite CV extremes. The CLS result extends to another backbone (on CLIP-ViT-B/16, ToMe scoring degrades sharply at $\eta = 0.50$ to $33.21$ against $0.76$ for importance, Wilcoxon $p < 10^{-50}$) and to a shifted distribution (ImageNet-Sketch \citep{imagenet_sketch}, where importance keeps a more than $100\times$ advantage at low ratios). The dense result extends to other stage configurations: on NYU-Depth-V2 \citep{nyu_depth}, the 2-stage [4, 8] schedule beats ToMe scoring at every ratio by $25$--$35\%$ at $224$, $336$, and $448$. Appendices~\ref{sec:full_sweep_results} and~\ref{sec:clip_cls_results} give the full numbers. Appendix~\ref{sec:extra_baselines} adds frozen-backbone baselines on DINOv2 CLS, namely EViT-style class attention, token norm, and similarity to the CLS token. The importance head is best at every ratio, and EViT is the strongest baseline. The same appendix reports ImageNet top-1 accuracy with a $k$-NN classifier on the pruned CLS embedding. Top-1 stays between $0.66$ and $0.68$ for every method and ratio ($0.674$ unpruned). This shows that pruning is safe for classification, but top-1 cannot separate the methods, which is why we report embedding fidelity.

\paragraph{When does importance-guided pruning win?} The two terms of (\ref{eq:pruning_bound}) explain part of the pattern. The importance score targets the first term, which is about which tokens to remove. It does nothing about the second term, which depends on how stale the frozen features are. ToMe's criterion picks tokens by feature redundancy rather than task sensitivity. For CLS decoders it merges them, which keeps their content in the sequence. For dense decoders it only chooses which tokens to hard-prune, but a token that closely resembles a neighbour is plausibly one whose missing update LLF can recover well, so its residual $\rho_t$ tends to be small. This gives the following cases.
\begin{itemize}
    \item \emph{CLS decoders, all ratios.} The decoder is invariant to permutations of the non-CLS tokens, so soft merging leaves little drift and only selection matters. Importance is best at every ratio on both backbones (Tables~\ref{tab:cls_pruning} and~\ref{tab:clip_cls_pruning}).
    \item \emph{Dense decoders, single stage, ImageNet.} At aggressive ratios many tokens are removed and selection dominates the error. Importance beats ToMe scoring from $\eta = 0.20$ upward and is the best of the three strategies from $\eta = 0.30$ upward (Table~\ref{tab:dpt_pruning}). At low ratios few tokens are removed, so the selection term is small and the second term dominates. All three strategies pay the staleness cost of hard pruning, and ToMe scoring, which favours tokens that are easy to recover, is competitive or better.
    \item \emph{Dense decoders, single stage, NYU-Depth-V2.} With heads trained on NYU, the pattern reverses. Importance beats ToMe scoring at $\eta = 0.05$--$0.30$ (Table~\ref{tab:dpt_nyu_matched}) and is slightly worse at $\eta = 0.50$ at all three resolutions (Table~\ref{tab:dpt_robustness}). The account above does not explain this dependence on the dataset.
    \item \emph{Multi-stage schedules.} Spreading removal across layers shrinks the per-stage residuals. The 2-stage [4, 8] schedule beats ToMe scoring at every ratio and every resolution we tested (Tables~\ref{tab:dpt_nyu_matched} and~\ref{tab:dpt_nyu_2stage_336_448}). The 4-stage schedule fails at $\eta \geq 0.30$. At these ratios repeated pruning compounds the residuals, and the quadratic term grows faster than the first-order savings.
    \item \emph{Resolution.} Larger inputs increase $N$. If this spreads the task-sensitive directions over more tokens, per-token scoring becomes less informative (Theorem~\ref{thm:impossibility}), which matches the single-stage loss at $448$ on ImageNet (Table~\ref{tab:dpt_robustness}).
\end{itemize}
One pattern in Table~\ref{tab:dpt_pruning} is not explained by this account. On ImageNet, the single-stage DPT error first decreases as the prune ratio grows, for all three strategies, reaching its minimum between $\eta = 0.20$ and $\eta = 0.40$ before rising again (e.g.\ random: $4.59$ at $\eta = 0.05$, $3.16$ at $\eta = 0.30$, $3.29$ at $\eta = 0.50$). We have verified that this effect is real and not an evaluation artifact, but we do not yet know its cause. It does not appear on NYU-Depth-V2 (Table~\ref{tab:dpt_nyu_matched}), where the error grows monotonically with the prune ratio.

In practice, we recommend importance-weighted soft merging for CLS-style decoders. For dense decoders, we recommend a 2-stage schedule with LLF, which is the only configuration that beat ToMe scoring at every ratio and resolution we tested (on NYU). Single-stage results depend on the dataset, as described above. The diagnostic gives the regime label needed to make this choice before training. We do not gate LLF dynamically. LLF is unnecessary for CLS soft merging and most useful for dense hard pruning, where drift after pruning is largest.

\paragraph{Cost.} We profiled single-stage dense depth pruning on a single A40 at batch size 1. At $224 \times 224$ and $\eta = 0.20$, pruning removes $12.1\%$ of the backbone's theoretical FLOPs, but the pruned model runs slightly slower than the unpruned one in wall-clock time ($0.96\times$, and $0.98\times$ with ToMe scoring). With 257 tokens, the attention savings are smaller than the fixed cost of scoring, removing, and re-inserting tokens. At $448 \times 448$ the FLOP saving at $\eta = 0.20$ rises to $16.7\%$ and the wall-clock speedup becomes positive ($1.12\times$, and $1.15\times$ with ToMe scoring). The importance head and LLF together cost about $2$--$3\%$ more than ToMe scoring. The head ($\sim$310K parameters) runs once per pruning stage, and LLF runs the final ViT block on all $N$ tokens instead of the kept ones. The 2-stage [4, 8] schedule saves $8.6\%$ of FLOPs at $\eta = 0.20$ and runs at $0.88\times$ at $224 \times 224$. The ToMe score is cheaper to compute than the head, but on ImageNet at $\eta \geq 0.20$ it gives larger degradation (Table~\ref{tab:cls_pruning}).

\section{Full Sweep Results}
\label{sec:full_sweep_results}

\paragraph{Configurations of Table~\ref{tab:regime_taxonomy}.} The output dimensions of those probes are $M \approx 5 \times 10^4$ for both depth decoders, $768$ for the CLS embedding, $9$ for the VGGT camera head, and $192$ for VGGT pointcloud, which subsamples 64 points. VGGT's depth head is also DPT-style, so we name each row of that table by its backbone rather than by its decoder type.
Table~\ref{tab:full_sweep} extends the Hutchinson tractability sweep with $r = 20$, $m = 100$ probes, and $q = 2$ power-iteration steps, averaged over 50 ImageNet-val images. We report $\kappa_{cap}(20)$, the truncated effective rank $r_{\text{eff}}^{\text{trunc}}$, the full-spectrum effective rank $r_{\text{eff}}^{\text{SLQ}}$, the tail-richness ratio, and the per-token CV, for Depth-Anything ViT-B/14 single-layer and for DINOv2 CLS at four probe layers. The regime structure of \S\ref{sec:regime_taxonomy} holds across the rows. Depth-Anything single-layer sits in regime~(ii) ($\kappa_{cap} \geq 96\%$, $r_{\text{eff}}^{\text{SLQ}} \approx 4$, ratio $\approx 1.2\times$). DINOv2 CLS sits in regime~(iii), with $\kappa_{cap}$ falling from $58\%$ at L02 to $33\%$ at L10, possibly because the CLS-token Jacobian spreads across more attention paths at later layers.

\paragraph{VGGT depth: an example of regime (i).} VGGT depth at L16 (full) sits in regime~(i) of \S\ref{sec:regime_taxonomy}. Its $\kappa_{cap}(20)$ is $76.2\%$, and its spectrum at the input to VGGT's heads is rich ($r_{\text{eff}}^{\text{SLQ}} = 282$ at $x_{\text{vis}}$). Two features distinguish it from the other configurations in Table~\ref{tab:regime_taxonomy}. First, unlike DPT, its CV of $0.63$ is high rather than near zero. Per-token architectures would therefore be viable if the rank budget were large enough, so the obstacle is rank and not delocalization. Second, the VAE compression of \S\ref{sec:vae_extension} brings the rank down to $r_{\text{eff}}^{\text{SLQ}} \in [30, 53]$, which moves the pipeline into regime~(ii) and makes the SPN trainable.

\subsection{Multi-layer probes and layer spacing}
\label{sec:multi_layer}
One might expect multi-layer dense decoders to appear intractable simply because they read from many layers. Our results indicate that the number of layers alone does not explain the observed intractability. We diagnose Depth Anything V2 at its natural multi-layer probe $\mathbf{x}_{\text{dpt}} = [F^2 \mid F^5 \mid F^8 \mid F^{11}] \in \mathbb{R}^{257 \times 3072}$, the concatenated DPT hook features at all four layers of the 12-layer ViT-B backbone. With the same number of layers ($K = 4$) as VGGT, the spectral-entropy effective rank is $r_{\text{eff}}^{\text{SLQ}} = 36.0$, close to the single-layer result at L02 ($r_{\text{eff}}^{\text{SLQ}} \in [20, 34]$). The coverage rank satisfies $\kappa_{cap}(50) = 0.900$, so the probe is tractable at rank budget $r = 50$. By contrast, VGGT's span layers $\{4, 11, 17, 23\}$ of a 24-layer model and produce $r_{\text{eff}}^{\text{SLQ}} = 282.6$, $\kappa_{cap}(50) = 0.578$, and $r_{0.90} \gg 100$. We attribute the $282.6 / 36.0 \approx 7.8\times$ gap to how orthogonal the layers are. The DA V2 layers are only 3 layers apart in a 12-layer model, so their Jacobians are nearly aligned (effective layer multiplicity $K_{\text{eff}} \approx 1.1\times$) and reading several layers costs little extra rank. VGGT's layers span very different feature levels, so their Jacobians are nearly orthogonal and the penalty is the full $K_{\text{eff}} \approx K = 4\times$.

The two probes also differ in spatial structure. At the single-layer L02 probe, DA V2 has $\mathrm{CV} \approx 0.026$, which is fully delocalized, and Theorem~\ref{thm:impossibility} rules out per-token scoring. At the multi-layer probe $\mathbf{x}_{\text{dpt}}$, $\mathrm{CV} = 2.381$, which is highly localized. This reversal is not a contradiction. The low CV at L02 arises downstream of the 10 subsequent ViT blocks, whose global self-attention spreads any input perturbation across all tokens before it reaches the decoder layers. The $\mathbf{x}_{\text{dpt}}$ probe bypasses those blocks and so keeps the spatial locality of the depth task, in which object boundaries and foreground--background transitions contribute more to depth sensitivity than the interiors of regions. In the two architectures we measured, intractability at multi-layer probes therefore comes from layer layers that are nearly orthogonal, as in VGGT's deep, widely spaced layers, and not from dense decoding or multi-layer reading as such.

\paragraph{Probe-point effective ranks.}  The layer we choose to measure the spectrum at can have a big impact. \emph{Tractable (structural).} VGGT camera at L16 single-layer has $r_{\text{eff}}^{\text{SLQ}} \leq 9$ and a 90\%-cumulative rank $r_{0.90} \leq 9$, because the camera head has only nine outputs. \emph{Tractable (concentrated).} DA V2 DPT at L02 single-layer has $r_{\text{eff}}^{\text{SLQ}} \in [20, 34]$ and $r_{0.90} \approx 15$ across seeds, consistent with the regime~(ii) classification used for the importance head. \emph{Intractable in the original space.} Probing VGGT depth and VGGT pointcloud at the input to VGGT's task heads, $x_{\text{vis}}$ (the concatenated tokens of layers 4, 11, 17, 23), gives $r_{\text{eff}}^{\text{SLQ}} = 282$ and $212$, with $r_{0.90} \gg 100$ in both cases. These spectra are far too rich for any rank-$r$ SPN we can train. \emph{Tractable after VAE compression.} Routing the same probe through the VAE bottleneck of \S\ref{sec:vae_extension} lowers the rank to $r_{\text{eff}}^{\text{SLQ}} \in [30, 53]$ and $r_{0.90} \approx 40$. Here $r_{0.90}$ is the smallest $k$ at which the cumulative spectrum reaches $90\%$ of $\|J\|_F^2$, i.e.\ $r_{0.90} = \min\{k : \kappa_{cap}(k) \geq 0.9\}$.

\begin{table}[h]
\centering
\small
\caption{Full Hutchinson tractability sweep covering Depth-Anything ViT-B/14 (DPT depth, single-layer) at three probe layers, with the depth output downsampled to $8{\times}8$, $16{\times}16$, or $32{\times}32$, and DINOv2 ViT-B/14 CLS at four probe layers. ``Ratio'' is $r_{\text{eff}}^{\text{SLQ}}/r_{\text{eff}}^{\text{trunc}}$.}
\label{tab:full_sweep}
\begin{tabular}{p{4.4cm}ccccc}
\toprule
Config & $\kappa_{cap}(20)$ & $r_{\text{eff}}^{\text{trunc}}$ & $r_{\text{eff}}^{\text{SLQ}}$ & Ratio & CV \\
\midrule
\multicolumn{6}{l}{\emph{Depth-Anything ViT-B/14 (DPT depth, single-layer, downsampled output)}} \\
\midrule
DA, L05, $16{\times}16$ & 96.30\% & 3.54 &  4.3 & 1.2$\times$ & 1.06 \\
DA, L05, $\phantom{0}8{\times}\phantom{0}8$ & 96.36\% & 3.18 &  4.2 & 1.3$\times$ & 1.17 \\
DA, L05, $32{\times}32$ & 96.77\% & 3.67 &  4.4 & 1.2$\times$ & 0.82 \\
DA, L08, $16{\times}16$ & 97.53\% & 2.95 &  3.6 & 1.2$\times$ & 1.16 \\
DA, L10, $16{\times}16$ & 97.13\% & 3.45 &  3.9 & 1.1$\times$ & 1.26 \\
DA, L10, $\phantom{0}8{\times}\phantom{0}8$ & 97.59\% & 3.19 &  4.2 & 1.3$\times$ & 1.44 \\
DA, L10, $32{\times}32$ & 96.21\% & 3.53 &  4.2 & 1.2$\times$ & 0.98 \\
\midrule
\multicolumn{6}{l}{\emph{DINOv2 ViT-B/14 CLS}} \\
\midrule
DINOv2 CLS, L02              & 58.29\% & 46.46 &  90.8 & 2.0$\times$ & 0.83 \\
DINOv2 CLS, L05              & 51.41\% & 52.91 & 122.1 & 2.3$\times$ & 1.12 \\
DINOv2 CLS, L08              & 41.88\% & 59.58 & 189.9 & 3.2$\times$ & 2.03 \\
DINOv2 CLS, L10              & 33.36\% & 63.68 & 270.2 & 4.2$\times$ & 4.06 \\
\bottomrule
\end{tabular}
\end{table}

\paragraph{DPT pruning robustness across resolutions and datasets.} Table~\ref{tab:dpt_robustness} reports single-stage [4] DPT pruning quality at the aggressive ratio $\eta = 0.50$ on ImageNet (IN) and NYU-Depth-V2, at three input resolutions. Importance wins on ImageNet at 224 and 336 and loses to ToMe scoring at 448. On NYU at $\eta = 0.5$, the three strategies are within $\sim 1$ point of each other, and random selection (not shown) is nominally best at all three resolutions. Single-stage pruning is past its useful range once half the tokens are dropped on dense depth. The 2-stage [4, 8] configuration of Table~\ref{tab:dpt_nyu_matched} restores the importance advantage on NYU.

\begin{table}[h]
\centering
\caption{Single-stage [4] DPT pruning at $\eta = 0.50$ on ImageNet and NYU-Depth-V2 at three input resolutions. Additional SILog ($\times 100$), lower is better. ToMe denotes ToMe-score token selection (\S\ref{sec:hard_prune}). Heads are retrained for each (dataset, resolution) pair except IN@224, which reuses the head of Table~\ref{tab:dpt_pruning}. Mean over 3 seeds.}
\label{tab:dpt_robustness}
\begin{tabular}{lcccccc}
\toprule
Method & IN@224 & IN@336 & IN@448 & NYU@224 & NYU@336 & NYU@448 \\
\midrule
ToMe score & 4.09 & 3.65 & \textbf{4.70} & 6.40 & 6.84 & 6.98 \\
Importance  & \textbf{3.08} & \textbf{3.56} & 5.11 & 6.43 & 6.91 & 7.38 \\
\bottomrule
\end{tabular}
\end{table}

\paragraph{Dataset-matched multi-stage pruning on NYU.} Table~\ref{tab:dpt_nyu_matched} reports DPT pruning quality on NYU with importance heads trained directly on NYU rather than transferred from ImageNet, under three progressive-pruning stage configurations. The 2-stage [4, 8] configuration beats ToMe scoring at every ratio, with $\approx 25$--$35\%$ relative reduction. The other two configurations depend on the ratio. Single-stage wins everywhere except $\eta = 0.5$, where it ties ToMe scoring. The 4-stage [1, 4, 7, 10] schedule gives the best absolute numbers at low ratios ($\eta \leq 0.20$) but fails at high ratios ($\eta \geq 0.30$).

\begin{table}[h]
\centering
\caption{DPT pruning on NYU-Depth-V2 @ 224 with NYU-trained importance heads. Additional SILog ($\times 100$), lower is better. ToMe denotes ToMe-score token selection. Variance $\leq 0.12$ across 3 seeds. Bold marks the best per (stages, ratio) cell. The 2-stage [4, 8] configuration also wins at NYU@336 and @448 (App.~\ref{sec:dpt_full_sweep_2stage}).}
\label{tab:dpt_nyu_matched}
\begin{tabular}{llccccc}
\toprule
Stages & Method & $\eta{=}0.05$ & $0.10$ & $0.20$ & $0.30$ & $0.50$ \\
\midrule
\multirow{2}{*}{[4]}        & ToMe score & 4.12 & 4.89 & 5.17 & 5.32 & \textbf{6.40} \\
                            & Importance & \textbf{3.07} & \textbf{3.82} & \textbf{4.58} & \textbf{5.00} & 6.43 \\
\midrule
\multirow{2}{*}{[4, 8]}     & ToMe score & 3.52 & 4.44 & 5.19 & 5.37 & 5.58 \\
                            & Importance & \textbf{2.42} & \textbf{3.07} & \textbf{3.86} & \textbf{4.34} & \textbf{5.12} \\
\midrule
\multirow{2}{*}{[1, 4, 7, 10]} & ToMe score & 2.16 & 2.51 & 3.08 & 3.59 & \textbf{4.90} \\
                               & Importance & \textbf{1.03} & \textbf{1.56} & \textbf{2.72} & 4.16 & 6.43 \\
\bottomrule
\end{tabular}
\end{table}

\paragraph{2-stage [4, 8] across NYU resolutions.}
\label{sec:dpt_full_sweep_2stage}
Table~\ref{tab:dpt_nyu_2stage_336_448} reports the 2-stage [4, 8] configuration at NYU@336 and NYU@448 with heads retrained at the target resolution. Importance beats ToMe at every ratio at both resolutions, with relative reductions of $25$--$35\%$ that match the @224 pattern reported in Table~\ref{tab:dpt_nyu_matched}. Variance $\leq 0.14$ across 3 seeds.

\begin{table}[h]
\centering
\caption{2-stage [4, 8] DPT pruning on NYU-Depth-V2 @ 336 and @ 448 with NYU-trained importance heads. Additional SILog ($\times 100$), lower is better. ToMe denotes ToMe-score token selection. Bold marks the best per (resolution, ratio) cell.}
\label{tab:dpt_nyu_2stage_336_448}
\begin{tabular}{llccccc}
\toprule
Resolution & Method & $\eta{=}0.05$ & $0.10$ & $0.20$ & $0.30$ & $0.50$ \\
\midrule
\multirow{2}{*}{NYU@336} & ToMe score & 3.50 & 4.33 & 5.00 & 5.29 & 5.99 \\
                         & Importance & \textbf{2.50} & \textbf{3.22} & \textbf{4.09} & \textbf{4.59} & \textbf{5.39} \\
\midrule
\multirow{2}{*}{NYU@448} & ToMe score & 3.72 & 4.70 & 5.46 & 5.58 & 6.13 \\
                         & Importance & \textbf{2.29} & \textbf{2.99} & \textbf{3.93} & \textbf{4.53} & \textbf{5.51} \\
\bottomrule
\end{tabular}
\end{table}

\section{Importance Head Ablations}
\label{sec:conformal_head_ablations}
We ablate four design choices of the importance head. Unless noted otherwise, numbers are mean $\pm$ std across 3 seeds and the metric is cosine distance degradation ($\times 100$) vs.\ the full-pass DINOv2 CLS embedding on ImageNet-val.

\paragraph{E.1 Block norms vs.\ $\sigma^2$-weighted target.} On DPT depth at probe layer 4, training the importance head against the $\sigma^2$-weighted target of Equation~\ref{eq:imp_target} gives Spearman $\rho \approx 0.20$. Training against plain Jacobian block norms $\|J_{:,t}\|_F$ gives only $\rho \approx 0.09$. We run this ablation on DPT because it is the delocalized case ($\mathrm{CV} \approx 0.026$ at the L02 and L10 probes of Table~\ref{tab:regime_taxonomy}), where Theorem~\ref{thm:impossibility} predicts that the block-norm signal is approximately constant across tokens.

\paragraph{E.2 Ranking-loss weight $w_{\text{rank}}$.} Setting $w_{\text{rank}} = 0$ removes the ranking term and reduces the loss to log-space regression. Setting $w_{\text{rank}} = 1.0$ doubles the weight relative to the default $w_{\text{rank}} = 0.5$. Pruning quality is nearly the same across the three settings (Table~\ref{tab:abl_rw}). We expect the ranking term to matter more in harder regimes, where the importance head is far from the ceiling.

\begin{table}[h]
\centering
\caption{Ablation E.2: ranking-loss weight $w_{\text{rank}}$. DINOv2 CLS L10 @ 224, $n_{\text{train}} = 2000$.}
\label{tab:abl_rw}
\begin{tabular}{lccccccc}
\toprule
$w_{\text{rank}}$ & $\eta{=}0.05$ & $0.10$ & $0.15$ & $0.20$ & $0.30$ & $0.40$ & $0.50$ \\
\midrule
$0.0$ & 0.001 & 0.007 & 0.024 & 0.058 & 0.255 & 0.811 & 2.142 \\
$0.5$ & 0.001 & 0.007 & 0.024 & 0.060 & 0.250 & 0.803 & 2.089 \\
$1.0$ & 0.001 & 0.007 & 0.024 & 0.058 & 0.249 & 0.793 & 2.171 \\
\bottomrule
\end{tabular}
\end{table}

\paragraph{E.3 Architecture size.} We sweep the hidden width $H$ over $\{128, 256, 512\}$ (default $256$) and the number of attention heads over $\{2, 4, 8\}$ (default $4$). Pruning quality plateaus at the default ($H = 256$, $4$ heads, $310$K parameters), and larger heads give no measurable improvement. Two heads are slightly better than four at every ratio except $\eta = 0.05$, where they tie. We keep four heads in all other experiments.

\begin{table}[h]
\centering
\caption{Ablation E.3: hidden width and attention-head sweep. DINOv2 CLS L10 @ 224, $n_{\text{train}} = 2000$, $w_{\text{rank}} = 0.5$.}
\label{tab:abl_arch}
\begin{tabular}{lccccccc}
\toprule
Config & $\eta{=}0.05$ & $0.10$ & $0.15$ & $0.20$ & $0.30$ & $0.40$ & $0.50$ \\
\midrule
$H = 128$ & 0.001 & 0.007 & 0.024 & 0.060 & 0.260 & 0.806 & 2.124 \\
$H = 256$ (default) & 0.001 & 0.007 & 0.024 & 0.060 & 0.250 & 0.803 & 2.089 \\
$H = 512$ & 0.001 & 0.007 & 0.023 & 0.057 & 0.244 & 0.788 & 2.105 \\
\midrule
$2$ heads & 0.001 & 0.006 & 0.020 & 0.052 & 0.226 & 0.741 & 2.011 \\
$4$ heads (default) & 0.001 & 0.007 & 0.024 & 0.060 & 0.250 & 0.803 & 2.089 \\
$8$ heads & 0.001 & 0.007 & 0.022 & 0.055 & 0.245 & 0.802 & 2.124 \\
\bottomrule
\end{tabular}
\end{table}

\paragraph{E.4 Training-set size.} We vary the number of training images $n_{\text{train}} \in \{500, 1000, 2000, 4000\}$. Pruning quality saturates by $n_{\text{train}} = 2000$ on DINOv2 (the default). On DPT there is no consistent trend: $n_{\text{train}} = 4000$ is best at the two lowest ratios and $n_{\text{train}} = 500$ at the others.

\begin{table}[h]
\centering
\caption{Ablation E.4: training-set size. Top: DINOv2 CLS L10 @ 224 (cosine $\times 100$). Bottom: DPT depth L4 @ 224 (SILog $\times 100$). Both blocks use the soft-merge pruning pipeline, not the hard-prune-plus-LLF pipeline of \S\ref{sec:hard_prune}, so the DPT values are not comparable with those of Table~\ref{tab:dpt_pruning}. Only the trend across $n_{\text{train}}$ is meaningful here.}
\label{tab:abl_budget}
\begin{tabular}{lccccccc}
\toprule
$n_{\text{train}}$ & $\eta{=}0.05$ & $0.10$ & $0.15$ & $0.20$ & $0.30$ & $0.40$ & $0.50$ \\
\midrule
\multicolumn{8}{l}{\emph{DINOv2 CLS L10}} \\
$500$ & 0.001 & 0.007 & 0.025 & 0.066 & 0.270 & 0.838 & 2.024 \\
$1000$ & 0.001 & 0.008 & 0.024 & 0.058 & 0.246 & 0.781 & 2.131 \\
$2000$ & 0.001 & 0.007 & 0.024 & 0.060 & 0.250 & 0.803 & 2.089 \\
$4000$ & 0.001 & 0.007 & 0.025 & 0.064 & 0.273 & 0.872 & 2.196 \\
\midrule
\multicolumn{8}{l}{\emph{DPT depth L4}} \\
$500$ & 0.997 & 1.452 & 1.787 & 2.130 & 2.797 & 3.421 & 3.927 \\
$1000$ & 1.018 & 1.472 & 1.835 & 2.193 & 2.860 & 3.492 & 4.020 \\
$2000$ & 0.992 & 1.448 & 1.870 & 2.244 & 2.969 & 3.660 & 4.243 \\
$4000$ & 0.953 & 1.401 & 1.815 & 2.181 & 2.890 & 3.566 & 4.117 \\
\bottomrule
\end{tabular}
\end{table}

\section{Additional Backbone: CLIP CLS Pruning}
\label{sec:clip_cls_results}
We replicate the CLS pruning experiment on CLIP-ViT-B/16 \citep{clip} with the same probe layer ($\ell = 10$), the same head architecture and training protocol, and the same evaluation set. Table~\ref{tab:clip_cls_pruning} reports the result. Table~\ref{tab:clip_cls_wilcoxon} reports the per-ratio Wilcoxon paired test of $\Delta = h_\theta - h_{\text{ToMe}}$ across the evaluation set, with bootstrap $95\%$ confidence intervals.

\begin{table}[h]
\centering
\caption{CLIP CLS L10 pruning quality, ImageNet-val @ 224, mean $\pm$ std across 3 seeds. Cosine distance degradation $\times 100$. ToMe scoring degrades sharply at $\eta = 0.50$, approaching the random baseline, while the importance head's degradation is $44\times$ smaller.}
\label{tab:clip_cls_pruning}
\begin{tabular}{lccccccc}
\toprule
Method & $\eta{=}0.05$ & $0.10$ & $0.15$ & $0.20$ & $0.30$ & $0.40$ & $0.50$ \\
\midrule
Random    & 3.10\,\scriptsize$\pm$0.44 & 7.03\,\scriptsize$\pm$1.48 & 13.31\,\scriptsize$\pm$1.30 & 13.55\,\scriptsize$\pm$2.00 & 23.73\,\scriptsize$\pm$2.41 & 30.10\,\scriptsize$\pm$1.09 & 36.31\,\scriptsize$\pm$0.69 \\
ToMe score & 0.12\,\scriptsize$\pm$0.01 & 0.17\,\scriptsize$\pm$0.04 & 0.21\,\scriptsize$\pm$0.03 & 0.21\,\scriptsize$\pm$0.05 & 0.33\,\scriptsize$\pm$0.04 & 2.40\,\scriptsize$\pm$0.38 & 33.21\,\scriptsize$\pm$1.09 \\
Importance & \textbf{0.001}\,\scriptsize$\pm$0.000 & \textbf{0.004}\,\scriptsize$\pm$0.000 & \textbf{0.012}\,\scriptsize$\pm$0.001 & \textbf{0.026}\,\scriptsize$\pm$0.002 & \textbf{0.094}\,\scriptsize$\pm$0.009 & \textbf{0.29}\,\scriptsize$\pm$0.01 & \textbf{0.76}\,\scriptsize$\pm$0.03 \\
\bottomrule
\end{tabular}
\end{table}

\begin{table}[h]
\centering
\caption{Wilcoxon paired test, CLIP CLS @ 224. $\Delta = $ Importance $-$ ToMe per evaluation pair, in raw cosine-distance units (not $\times 100$). Negative $\Delta$ means importance is better. All ratios are significant at $p < 10^{-4}$, and all except $\eta = 0.30$ and $\eta = 0.40$ at $p < 10^{-50}$.}\vspace{2mm}
\label{tab:clip_cls_wilcoxon}
\begin{tabular}{lcccc}
\toprule
$\eta$ & Mean $\Delta$ & 95\% bootstrap CI & Wilcoxon $p$ \\
\midrule
$0.05$ & $-0.00119$ & $[-0.00145, -0.00095]$ & $2.9 \times 10^{-92}$ \\
$0.10$ & $-0.00170$ & $[-0.00227, -0.00124]$ & $9.2 \times 10^{-90}$ \\
$0.15$ & $-0.00199$ & $[-0.00268, -0.00143]$ & $5.8 \times 10^{-76}$ \\
$0.20$ & $-0.00180$ & $[-0.00240, -0.00129]$ & $5.5 \times 10^{-57}$ \\
$0.30$ & $-0.00232$ & $[-0.00399, -0.00118]$ & $5.8 \times 10^{-5}$ \\
$0.40$ & $-0.02110$ & $[-0.02970, -0.01339]$ & $3.2 \times 10^{-9}$ \\
$0.50$ & $-0.32452$ & $[-0.33672, -0.31222]$ & $1.5 \times 10^{-95}$ \\
\bottomrule
\end{tabular}

\end{table}

\paragraph{Distribution shift: ImageNet-Sketch.} We evaluate the same DINOv2 CLS head on ImageNet-Sketch (3 seeds, cosine $\times 100$). At $\eta \in \{0.05, 0.10, 0.15, 0.20, 0.30, 0.40, 0.50\}$, importance gives $0.001 / 0.005 / 0.018 / 0.048 / 0.216 / 0.777 / 2.169$ and ToMe scoring gives $0.519 / 0.662 / 0.665 / 0.942 / 1.072 / 1.661 / 2.171$. The importance head keeps a more than $100\times$ advantage at low ratios. The gap closes at $\eta = 0.50$, but neither method is much worse than on in-distribution data.

\section{Additional Baselines and Classification Accuracy}
\label{sec:extra_baselines}
\paragraph{Frozen-backbone baselines.} Methods such as DynamicViT, A-ViT, and Token Cropr fine-tune the backbone together with their scoring mechanism, so they do not apply to a frozen backbone--decoder pair. We therefore compare against scoring rules that work on a frozen model, all inside the same CLS merge pipeline (\S\ref{sec:soft_merge}). \emph{EViT} scores tokens by the strength of the attention given by the CLS token \citep{evit}. \emph{Token norm} uses the $\ell_2$ norm of each token's features. \emph{CLS similarity} uses the cosine similarity of each token to the CLS token. Table~\ref{tab:extra_baselines} reports the results on DINOv2 CLS at probe layer 10. The importance head is best at every ratio, or tied with EViT at the three ratios where both round to the same two decimals. EViT is the strongest baseline and comes close at low ratios, but it needs the attention weights, whereas the head uses a single forward pass on the features. Both training-free baselines beat ToMe scoring. This table comes from a separate single-seed run, so its values differ from Table~\ref{tab:cls_pruning}, which averages three seeds. The differences are visible for the baselines, for example ToMe scoring reads $0.53$ here against $0.67$ there at $\eta = 0.05$.

\begin{table}[h]
\centering
\caption{Frozen-backbone baselines, DINOv2 CLS L10, ImageNet-val @ 224. CLS cosine-distance degradation ($\times 100$), lower is better. Single seed, rounded to two decimals, so entries that tie at low ratios are not separated here.}
\label{tab:extra_baselines}
\begin{tabular}{lccccccc}
\toprule
Method & $\eta{=}0.05$ & $0.10$ & $0.15$ & $0.20$ & $0.30$ & $0.40$ & $0.50$ \\
\midrule
Importance     & \textbf{0.00} & \textbf{0.01} & \textbf{0.02} & \textbf{0.06} & \textbf{0.24} & \textbf{0.82} & \textbf{2.17} \\
EViT           & \textbf{0.00} & \textbf{0.01} & 0.03 & \textbf{0.06} & 0.28 & 0.88 & 2.30 \\
Token norm     & 0.04 & 0.14 & 0.25 & 0.38 & 0.78 & 1.45 & 2.59 \\
CLS similarity & 0.02 & 0.08 & 0.17 & 0.42 & 0.90 & 1.70 & 2.91 \\
ToMe score     & 0.53 & 0.71 & 0.97 & 0.99 & 1.77 & 2.28 & 3.69 \\
Random         & 0.14 & 0.36 & 0.58 & 0.83 & 1.29 & 2.07 & 3.48 \\
\bottomrule
\end{tabular}
\end{table}

\paragraph{Classification accuracy.} To check that embedding fidelity translates into classification accuracy, we classify the pruned DINOv2 CLS embeddings on ImageNet-val with a weighted $k$-NN classifier ($k = 20$, temperature $0.07$, a bank of 20{,}000 training embeddings). The unpruned model reaches top-1 $0.674$. For every method and every prune ratio, top-1 stays between $0.66$ and $0.68$. At $\eta = 0.50$ it is $0.670$ for the importance head, $0.668$ for ToMe scoring, and $0.664$ for random selection. Pruning is therefore safe for classification in this setting. The differences between methods are too small to separate them, because the classifier only needs the CLS token, which is never pruned. Cosine distance to the unpruned embedding is more sensitive, which is why we use it as the main metric.

\section{Token Importance Visualizations}
\label{sec:token_importance_visualizations}
Figure~\ref{fig:depth_importance} shows per-token importance maps for the DPT depth task on four ImageNet-val images, comparing the importance head with ToMe cosine scoring. Each heatmap is the per-token score reshaped to the $16{\times}16$ patch grid and alpha-blended over the input. White outlines mark the top-$25\%$ tokens that pruning would keep. The bottom row shows the depth map predicted by the frozen DPT decoder, so that the reader can see what task the importance signal serves. Each heatmap is annotated with its per-image Spearman $\rho$ against the target $\mathrm{imp}^*$.

Two qualitative observations stand out. First, the importance head's map concentrates on depth discontinuities, such as the curtain on the dark window, the dog's body against the rug, the hairdryer body, and the poodle's outline, and it agrees well with the target ($\rho \in [0.87, 0.95]$). These four images are illustrative. Their per-image $\rho$ is well above the held-out average of $\approx 0.20$ (\S\ref{sec:structural}), so they show where the head does well rather than typical agreement. The pattern is what the $\sigma^2$-weighted top-$r$ projection of Equation~\ref{eq:imp_target} predicts. DPT is delocalized on average (CV $\approx 0.026$ at the L02 and L10 probes of Table~\ref{tab:regime_taxonomy}), but the locally task-salient directions cluster on object boundaries, and the importance head recovers them. Second, ToMe's cosine-similarity heatmaps are nearly uncorrelated with the target ($\rho \in [-0.14, 0.27]$). They highlight tokens that differ from their neighbours rather than tokens the depth decoder is sensitive to, and so they pick scattered patches in the background and in textured regions instead of the depth-relevant regions.

\begin{figure}[h]
\centering
\includegraphics[width=\textwidth]{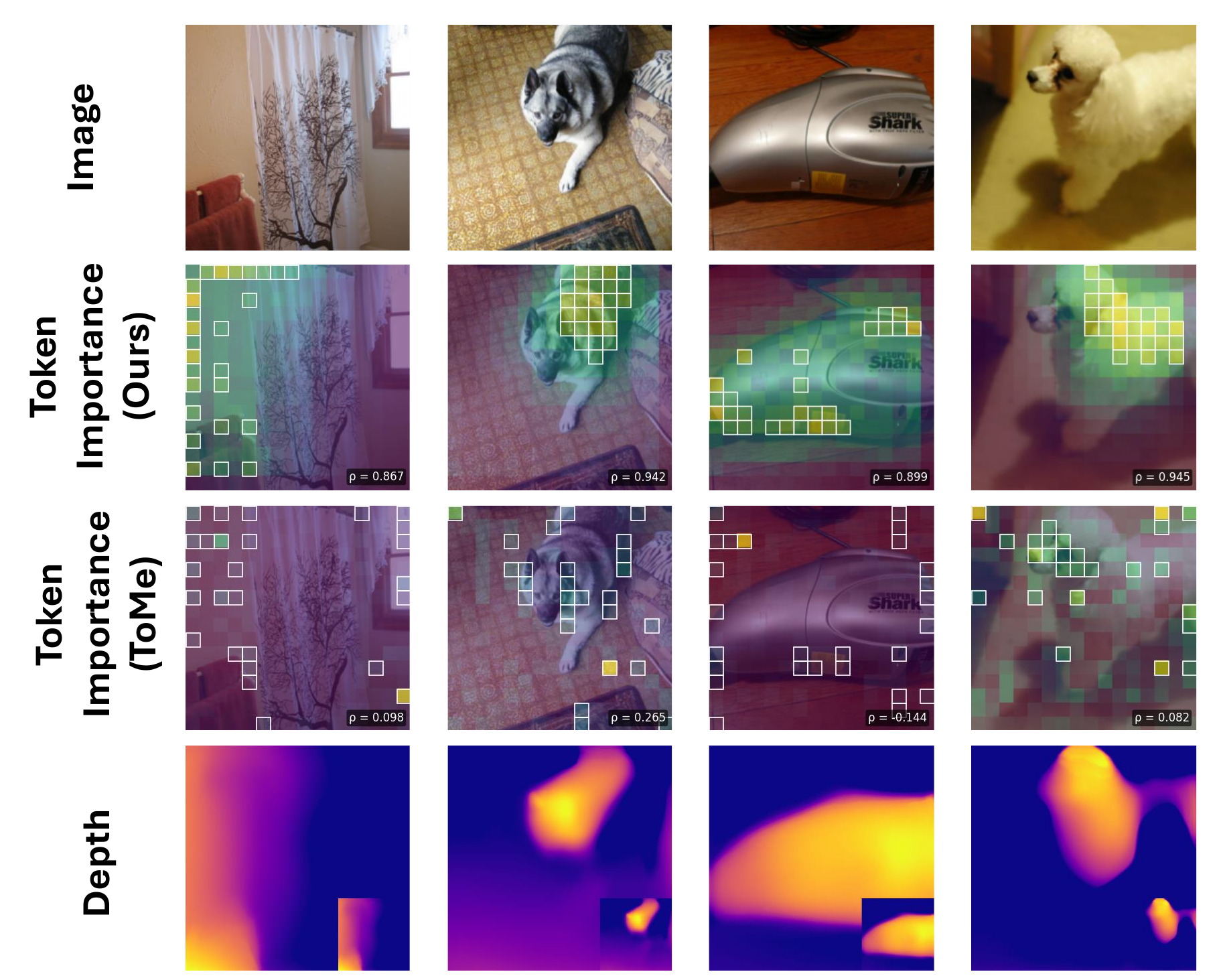}
\caption{DPT depth token importance on four ImageNet-val images. \emph{Row 1:} input. \emph{Row 2:} importance-head scores, with white outlines on the top-$25\%$ tokens and per-image Spearman $\rho$ against the target $\mathrm{imp}^*$ in the bottom-right corner. \emph{Row 3:} ToMe cosine-similarity scores, with the same overlay. \emph{Row 4:} depth predicted by the frozen DPT head. The importance head concentrates on depth discontinuities and matches the target at $\rho \in [0.87, 0.95]$ on these examples (the held-out average is $\approx 0.20$). ToMe is nearly uncorrelated with the target ($\rho \in [-0.14, 0.27]$).}
\label{fig:depth_importance}
\end{figure}

\section{SPN VAE Validation: Full Setup and Results}
\label{sec:spn_vae_validation}
We instantiate the VAE-compression pipeline on VGGT depth, which is in regime~(i) without compression (\S\ref{sec:vae_extension}). The probe point is the VAE latent $z \in \mathbb{R}^{4096}$ and the probe map is $z \mapsto \phi(G(z))$: the VAE decoder reconstructs the head-input features, which the frozen VGGT depth head uses. The SPN takes VGGT layer-23 features as input and predicts $U \in \mathbb{R}^{4096 \times r}$ in the latent space. Because this Jacobian has only $4096$ columns, an exact SVD gives the ground-truth subspace against which we measure $V_1$. We train the SPN via Algorithm~\ref{alg:randnla} and ablate the role of the RandNLA scheme by comparing three SPN variants with otherwise identical architecture and training data:
\begin{itemize}
    \item \textbf{Random init}, an SPN initialized at random with no supervision (worst-case baseline).
    \item \textbf{RandNLA $q = 1$}, our Algorithm~\ref{alg:randnla} with a single power-iteration step (cheapest meaningful setting).
    \item \textbf{Spectral}, supervision computed at every step from a high-rank deterministic SVD of $J$. This is the \emph{supervision ceiling} that RandNLA approximates, i.e.\ the best this SPN reaches when given exact targets.
\end{itemize}
Table~\ref{tab:spn_vae_variants} reports the subspace-alignment metric $V_1 = \tfrac{1}{r}\|U(F;\theta)^\top V_r^*\|_F^2 \in [0, 1]$ between the SPN's predicted modes and the true rank-$r$ task-sensitive subspace ($V_1 = 1$ is perfect alignment). Two conclusions follow.

\begin{table}[h]
\centering
\caption{SPN training on VAE-compressed VGGT depth. Subspace alignment $V_1$ between predicted modes and the true rank-$r$ subspace, higher is better. Random initialization has no signal, and RandNLA at the cheapest setting matches the supervision ceiling within noise. The difference between RandNLA and Spectral can be attributed to some noise and smoothing properties of RandNLA. }
\label{tab:spn_vae_variants}
\begin{tabular}{lc}
\toprule
SPN variant & $V_1$ \\
\midrule
Random init                              & $0.005$ \\
RandNLA, $q = 1$ (ours)                  & $\mathbf{0.713}$ \\
Spectral (exact SVD targets)          & $0.710$ \\
\bottomrule
\end{tabular}
\end{table}

First, the gap between random and structured supervision ($0.005$ vs.\ $0.713$, a $\approx 140\times$ improvement) shows that the SPN learns task geometry and not training-set artifacts. A random initialization does not come close to the true subspace, so the improvement comes from fitting the Jacobian's top-$r$ directions. Second, RandNLA at $q = 1$ matches the supervision ceiling within noise ($0.713$ vs.\ $0.710$). The ceiling sits at $\approx 0.71$ rather than $1$ because $V_1$ is an \emph{amortized} alignment. A single feature-conditioned network must predict the top-$r$ subspace for all images from features alone, and it cannot reproduce every image's SVD exactly. The $\approx 0.29$ shortfall is therefore the SPN's approximation error, not a limitation of the SVD or of RandNLA, which is why exact deterministic supervision (Spectral) does no better than cheap randomized supervision (RandNLA, $q = 1$). The saturation at $q = 1$ shows that, for the VAE-compressed Jacobian, a single power-iteration step already resolves the top-$r$ subspace. This is consistent with the geometric rate of Theorem~\ref{thm:convergence}, but it is an empirical finding, since the theorem's constant is too loose to predict it.

This experiment supports two claims. (a)~The VAE pathway of \S\ref{sec:vae_extension} works in practice. An intractable dense decoder becomes a compressed pipeline in which the SPN learns the rank-$r$ task-sensitive subspace, up to the amortization gap discussed above. (b)~The RandNLA training algorithm of \S\ref{sec:randnla_training} reaches the supervision ceiling at the cheapest setting, which supports both Algorithm~\ref{alg:randnla} and Theorem~\ref{thm:convergence}.

\section{Task-Induced Retrieval}
\label{sec:task_retrieval}
A second application of the learned geometry is task-sensitive image retrieval. Instead of scoring image similarity by Euclidean or cosine distance, we score it by a task-induced distance derived from the rank-$r$ pullback metric.

\paragraph{Setup.} Given a query feature $F_q$ and a gallery $\{F_i\}$ extracted from DINOv2 ViT-B/14 at $\ell = 10$, we rank the gallery by predicted distance to $F_q$. The reference distance is the depth-decoder distance $\|\phi(F_q) - \phi(F_i)\|$ between the dense outputs (AbsRel-normalised). Galleries are constructed from ImageNet-val (in-distribution) and from NYU-Depth-V2 (cross-dataset, training on ImageNet only). We report Spearman $\rho$ and Kendall $\tau$ between predicted and reference ranking, and MAE/RMSE between predicted and reference distance values, averaged over a held-out query set.

Both rank statistics compare two orderings of the same gallery, the one induced by the predicted distances and the one induced by the reference distances. \emph{Spearman $\rho$} replaces every distance by its rank in its own ordering and takes the Pearson correlation of those ranks, so it is $1$ whenever the predicted distances are any increasing function of the reference distances. \emph{Kendall $\tau$} instead counts pairs. For each pair of gallery items, the two orderings either agree (concordant) or disagree (discordant), and $\tau$ is the number of concordant pairs minus the number of discordant pairs, divided by the number of pairs. Both equal $1$ for identical orderings, $0$ for unrelated orderings, and $-1$ for exactly reversed orderings. Kendall $\tau$ is the stricter of the two, since a single swapped pair costs the same wherever it occurs, which is why its values are lower than $\rho$ throughout. Neither sees the scale of the distances, so we also report MAE and RMSE on the distance values themselves.

\paragraph{Distance definition.} The learned geometry is a field of seminorms on feature space. For a feature map $F$ and a perturbation $\Delta \in \mathbb{R}^{N \times D}$, write
\begin{equation}
\label{eq:drm_seminorm}
    \|\Delta\|_{g_\theta(F)} \;=\; s\,\big\| A(F)\,\Delta \big\|_2, \qquad
    A(F)\,\Delta \;:=\; W^\top \sum_{t=1}^{N} h_\theta(F)_t\, \Delta_t \;\in\; \mathbb{R}^{k},
\end{equation}
where $s \in \mathbb{R}_{>0}$ is a learned global scale, $W \in \mathbb{R}^{D \times k}$ is a learned low-rank factor, and $h_\theta(F)_t$ is the importance-head score at token $t$. This is the quadratic form of $g_\theta(F) = s^2 A(F)^\top A(F)$, which is positive semidefinite of rank at most $k$ and varies smoothly with $F$. The reading is that $\sum_t h_\theta(F)_t \Delta_t$ aggregates the per-token differences with task-dependent weights, and $W^\top$ projects that aggregate onto the rank-$k$ task-sensitive subspace.

A field of seminorms induces a length structure, and the associated distance is
\begin{equation}
\label{eq:drm_geodesic}
    d_{g_\theta}(F_1, F_2) \;=\; \inf_{\gamma} \int_0^1 \big\|\dot\gamma(t)\big\|_{g_\theta(\gamma(t))}\, dt,
\end{equation}
the infimum over piecewise-smooth paths from $F_1$ to $F_2$. This $d_{g_\theta}$ is a pseudometric. It is non-negative and vanishes on the diagonal, it is symmetric because reversing a path preserves its length, and it satisfies the triangle inequality because concatenating a path from $F_1$ to $F_2$ with one from $F_2$ to $F_3$ gives a path from $F_1$ to $F_3$. It is a pseudometric rather than a metric because $g_\theta$ is only positive semidefinite, so two distinct feature maps whose difference lies in $\ker A$ along the connecting path are at distance zero. This is the same degeneracy as in \S\ref{sec:background}, and it is intended: those directions are the ones the decoder ignores.

Computing (\ref{eq:drm_geodesic}) requires minimizing over paths at dimensionality $ND = 257 \times 768 \approx 2 \times 10^5$, which is prohibitive. We therefore evaluate the straight segment $\gamma(t) = F_1 + t\,\delta$ with $\delta = F_2 - F_1$, and approximate its length by the trapezoidal rule using the two endpoints,
\begin{equation}
\label{eq:drm_sym}
    d_{\text{sym}}(F_1, F_2) \;=\; \tfrac{1}{2}\Big[\, \|\delta\|_{g_\theta(F_1)} \;+\; \|\delta\|_{g_\theta(F_2)} \,\Big]
    \;=\; \tfrac{s}{2}\Big[\, \big\|A(F_1)\,\delta\big\|_2 \;+\; \big\|A(F_2)\,\delta\big\|_2 \,\Big].
\end{equation}
We call this the \emph{Distilled Riemannian Metric} (DRM), and it is what the reported numbers use. Each evaluation costs $\mathcal{O}(NDk)$ and needs only features, with no decoder pass and no matrix inversion.

Two caveats about (\ref{eq:drm_sym}) are worth stating plainly. First, the one-sided quantity $\|\delta\|_{g_\theta(F_1)}$ used inside it is not symmetric, since the seminorm is evaluated at one endpoint, which is why we symmetrize. Second, $d_{\text{sym}}$ is a quadrature of the length of one particular path rather than an infimum over paths, so it inherits non-negativity, symmetry and vanishing on the diagonal from (\ref{eq:drm_seminorm}), but it does not satisfy the triangle inequality in general. It is therefore a symmetric dissimilarity that approximates the pseudometric (\ref{eq:drm_geodesic}) to first order, and we use it as such. Where the triangle inequality is required, the corresponding object is $d_{g_\theta}$, or the constant-metric special case $\bar g = \mathbb{E}_F[g_\theta(F)]$, for which $\sqrt{\mathrm{vec}(\delta)^\top \bar g\, \mathrm{vec}(\delta)}$ is a genuine pseudometric because it comes from a fixed seminorm.

\paragraph{Training.} Triplets $(F_1, F_2, d_{\text{task}})$ are sampled by drawing image pairs from the training set, extracting layer-$\ell$ features, and computing $d_{\text{task}} = \|\phi(F_1) - \phi(F_2)\|$ from the frozen depth decoder. The loss combines a regression term on the task distance and a distillation term tying the per-token weights $h_\theta(F)_t$ to the Jacobian-derived target importance $\mathrm{imp}^*$ of \S\ref{sec:conformal_head}:
\[
\mathcal{L} \;=\; \mathcal{L}_{\text{reg}}\big(d_{\text{sym}}(F_1, F_2),\, d_{\text{task}}\big) \;+\; \lambda_d \, \mathcal{L}_{\text{distill}}.
\]
Without $\mathcal{L}_{\text{distill}}$, the weights $h_\theta$ have no anchor to the underlying geometry, and without $\mathcal{L}_{\text{reg}}$, the scale $s$ and projection $W$ are unsupervised. The two losses play different roles. $\mathcal{L}_{\text{reg}}$ calibrates the overall magnitude, and $\mathcal{L}_{\text{distill}}$ shapes the per-token contributions.

\paragraph{Methods compared.} Three feature-only distances on the same DINOv2 features. \emph{$L_2$} is $\|F_1 - F_2\|_F$ between layer-$\ell$ feature tensors. \emph{Cosine} uses CLS-pooled cosine similarity, the standard ViT retrieval baseline. \emph{DRM} is the symmetrised distance of Equation~\ref{eq:drm_sym}. All three operate at query time without re-running the depth decoder.

\paragraph{Results.} Table~\ref{tab:depth_retrieval_indist} reports the in-distribution evaluation. DRM's predicted ranking matches the reference at Spearman $\rho = 0.783$, against $0.223$ for $L_2$ and $0.206$ for cosine, a $3.5\times$ improvement in correlation. MAE drops from $11.43$ ($L_2$) and $0.677$ (cosine) to $0.233$. Table~\ref{tab:depth_retrieval_cross} reports the cross-dataset case (trained on ImageNet, evaluated on NYU-Depth-V2). DRM keeps its lead on every metric, with $\rho = 0.457$ against $0.338$ for cosine and MAE $0.246$ against $0.627$. The learned task-induced distance has the highest rank correlation in both settings. The gap narrows under distribution shift but does not collapse, which matches the pruning result that geometry trained on ImageNet transfers to NYU at lower but useful quality.

\begin{table}[h]
    \centering
    \caption{In-distribution depth retrieval (train ImageNet, test ImageNet, same-distribution split). Lower is better for MAE/RMSE; higher is better for Spearman/Kendall. Bold: best per column among learned/baseline rows.}
    \label{tab:depth_retrieval_indist}
    \begin{tabular}{lcccc}
    \toprule
    Method            & Spearman $\rho$ & Kendall $\tau$ & MAE     & RMSE    \\
    \midrule
    $L_2$ baseline    & 0.223           & 0.153          & 11.43   & 11.55   \\
    Cosine baseline   & 0.206           & 0.139          & 0.677   & 0.805   \\
    DRM               & \textbf{0.783}  & \textbf{0.581} & \textbf{0.233} & \textbf{0.289} \\
    \midrule
    Reference (decoder distance) & 1.000 & 1.000 & 0.000 & 0.000 \\
    \bottomrule
    \end{tabular}
\end{table}

\begin{table}[h]
    \centering
    \caption{Cross-dataset generalization: methods trained on ImageNet, evaluated on NYU-Depth-V2. The learned task-induced distance retains a clear lead over the Euclidean and cosine baselines on every metric.}
    \label{tab:depth_retrieval_cross}
    \begin{tabular}{lcccc}
    \toprule
    Method            & Spearman $\rho$ & Kendall $\tau$ & MAE     & RMSE    \\
    \midrule
    $L_2$ baseline    & 0.266           & 0.178          & 7.77    & 7.90    \\
    Cosine baseline   & 0.338           & 0.229          & 0.627   & 0.707   \\
    DRM               & \textbf{0.457}  & \textbf{0.312} & \textbf{0.246} & \textbf{0.310} \\
    \bottomrule
    \end{tabular}
\end{table}

\section{Glossary of Notation}
\label{sec:glossary}
\begin{center}
\small
\begin{tabular}{p{2.6cm}p{10.4cm}}
\toprule
Symbol / term & Meaning \\
\midrule
$F^\ell \in \mathbb{R}^{N \times D}$ & ViT features at probe layer $\ell$; $N$ tokens of dimension $D$. \\
$\phi$ & Frozen task decoder (DPT depth head, CLS embedding, VGGT heads). \\
$\psi_\ell$, $M$ & Probe map from layer-$\ell$ features to the $M$-dimensional task output. \\
$J$ & Jacobian of the probe map's output with respect to $F^\ell$, of size $M \times ND$. \\
$g = J^\top J$ & Pullback metric: $v^\top g v = \|Jv\|^2$ is the squared first-order output change under perturbation $v$. Positive semidefinite. \\
$\sigma_j$, $v_j$ & Singular values and right singular vectors of $J$, largest first. \\
$r$ & Number of directions kept in the low-rank metric ($r = 20$ by default). \\
$\kappa_{cap}(r)$ & Fraction of $\|J\|_F^2$ carried by the top-$r$ directions; the tractability diagnostic. \\
$r_{\text{eff}}$ & Entropic effective rank of the squared singular values (\emph{trunc}: top-$r$ only; \emph{SLQ}: full spectrum). \\
CV & Coefficient of variation of the per-token energy of the $v_j$ (Eq.~\ref{eq:cv}); low CV means delocalized. \\
$m$, $q$, $r_0$ & Hutchinson probes, power-iteration rounds, and sketch width. \\
$\mathrm{imp}^*$ & Jacobian-derived per-token importance target (Eq.~\ref{eq:imp_target}). \\
SPN, $g_\theta$ & Spectral Pullback Network; predicts the rank-$r$ metric from features (Eq.~\ref{eq:spn}). \\
Importance head, $h_\theta$ & Small feature-only network predicting $\mathrm{imp}^*$ per token. \\
LLF & Last-Layer Fusion; runs the final ViT block on the full token sequence to refresh hard-pruned tokens. \\
$\eta$ & Prune ratio: the fraction of tokens removed. \\
$V_1$ & Subspace alignment $\tfrac{1}{r}\|U^\top V_r^*\|_F^2$ between predicted and true top-$r$ subspaces. \\
\bottomrule
\end{tabular}
\end{center}

\section{Limitations}
\label{sec:limitations}
\paragraph{Scope of the measured geometry.} We measure $\kappa_{cap}$ at probe layers that feed task decoders, and not at internal attention or MLP outputs. Whether those internal layers fall into the same regimes is open, and answering it would need a different definition of the probe map, but should be doable with the same techniques.

Additionally, the compressed pullback taken through the VAE-reconstructed features rather than the original ones approximates $g$ well only in the directions the VAE reconstructs well. We have no analytic bound on the discrepancy, and the empirical evidence for the pathway is the drop in effective rank plus the SPN result of Appendix~\ref{sec:spn_vae_validation}.

\paragraph{Transfer of the learned models.} The importance head is trained for one (decoder, probe layer) pair. We have not tested whether a head transfers to a different decoder, so each new task currently needs its own head. The diagnostic itself has no such restriction, since it is computed directly from the Jacobian of whatever pair is of interest.

\paragraph{When geometric pruning helps.} The advantage over ToMe scoring depends on the configuration and on the dataset, discussed in \S\ref{sec:pruning_results}. Three cases are worth repeating here. Single-stage pruning loses at low prune ratios on ImageNet, and at $\eta = 0.50$ on NYU-Depth-V2. Four-stage pruning fails at high ratios, where repeated pruning compounds the residuals. Single-stage pruning is also sensitive to input resolution and loses to ToMe scoring at $448 \times 448$ on ImageNet. The two-stage schedule is the one configuration that wins at every ratio and resolution we tested.

\paragraph{Cost at inference.} The method targets output quality under pruning, not throughput. At $224 \times 224$ the pruned model is slightly slower in wall-clock time than the unpruned one, despite removing $12.1\%$ of the backbone's FLOPs, because the fixed cost of scoring and re-inserting tokens exceeds the attention savings at 257 tokens. A speedup appears only at higher resolution, and even there it is modest: $1.12\times$ at $448 \times 448$, against $1.15\times$ for ToMe scoring.

\paragraph{Scope of the evaluation.} The evaluation is diagnostic rather than a systems study. Experiments use a single A40 and between $50$ and $2{,}000$ images, with ToMe scoring as the primary dense-pruning baseline. The additional frozen-backbone baselines of Appendix~\ref{sec:extra_baselines} cover DINOv2 CLS only, so we have no comparable baseline sweep on dense decoders.

\section{Discussion}
\label{sec:discussion}
A common assumption in deep representation learning is that better features lead to better downstream performance. Our results suggest a complementary view. When the features are already as expressive as those of a ViT-B/14 backbone, what matters most is which directions in feature space the downstream task actually uses, and these directions can be recovered from the decoder's Jacobian without retraining the backbone. In this sense, learning which directions matter can be as valuable as learning better features. We believe that this idea can be used to target a whole range of improved methods, not just restricted to tasks such as token pruning and merging, and not just restricted to ViTs. 

\section{Future Work}
\label{sec:future_work}
Several follow-ups build on the importance head and the metric it distills. Because the head supplies a feature-only metric without backpropagating through the decoder, it enables \emph{Riemannian flow matching} under task-induced metrics \citep{flow_matching, riemannian_flow_matching}, which raises the question of whether a task-sensitive velocity field produces straighter trajectories than Euclidean flow matching. It also enables \emph{geodesic image interpolation and editing} as a metric-aware analogue of linear feature interpolation \citep{latent_space_oddity, geometry_gans}. 

Computing $\kappa_{cap}$ per token would give a matrix-free local-reliability estimate for dense prediction without retraining. Two further directions use the diagnostic itself. Applying $\kappa_{cap}$ to LLM feature spaces, with the probe map given by the unembedding composed with the residual stream, could indicate whether attribution methods such as integrated gradients \citep{integrated_gradients} or SmoothGrad \citep{smoothgrad} are reliable for a given decoder, at a cost of $\mathcal{O}(m + rq)$ Jacobian-vector products per layer. Using $\mathrm{tr}(g(F))$ together with the $\kappa_{cap}$ defect as out-of-distribution scores would give matrix-free OOD detection that can be benchmarked against energy-based \citep{energy_ood} and Mahalanobis \citep{mahalanobis_ood} baselines. Other directions are cross-task metric transfer, optimal-transport retrieval with $d_g$ as the ground cost \citep{sinkhorn}, and estimating the sectional curvature of $(F, g)$ \citep{geometry_hvae, understanding_diffusion_models}.

\end{document}